\newcommand{\RR}{\mathbb{R}}
\newcommand{\cD}{\mathcal{D}}
\newcommand{\cL}{\mathcal{L}}
\newcommand\norm[1]{\left\lVert#1\right\rVert}
\newcommand{\bp}{\mathbf{p}}
\newcommand{\bk}{\mathbf{k}}
\newcommand{\bx}{\mathbf{x}}
\newcommand{\by}{\mathbf{y}}
\newcommand{\bv}{\mathbf{v}}
\newcommand{\bq}{\mathbf{q}}
\newcommand{\btheta}{\boldsymbol{\theta}}
\newcommand{\ours}{LoRSU\xspace}
\newcommand{\xmark}{\ding{55}}%
\newcommand\mtiny[1]{\mbox{\tiny\ensuremath{#1}}}
\theoremstyle{plain}
\newtheorem{theorem}{Theorem}[section]
\newtheorem{lemma}[theorem]{Lemma}
\newtheorem{corollary}[theorem]{Corollary}
\theoremstyle{definition}
\newtheorem{definition}[theorem]{Definition}
\theoremstyle{remark}
\newtheorem{remark}[theorem]{Remark}
\DeclareMathOperator*{\argmax}{arg\,max}
\icmltitlerunning{Efficient Few-Shot Continual Learning in Vision-Language Models}
\begin{document}

\twocolumn[
\icmltitle{Efficient Few-Shot Continual Learning in Vision-Language Models}


\icmlsetsymbol{equal}{*}

\begin{icmlauthorlist}
\icmlauthor{Aristeidis Panos}{cam}
\icmlauthor{Rahaf Aljundi}{comp}
\icmlauthor{Daniel Olmeda Reino}{comp}
\icmlauthor{Richard E. Turner}{cam}
\end{icmlauthorlist}

\icmlaffiliation{cam}{University of Cambridge}
\icmlaffiliation{comp}{Toyota Motor Europe}

\icmlcorrespondingauthor{Aristeidis Panos}{ap2313@cam.ac.uk}

\icmlkeywords{Machine Learning, ICML}

\vskip 0.3in
]



\printAffiliationsAndNotice{}  

\begin{abstract}
Vision-language models (VLMs) excel in tasks such as visual question answering and image captioning. However, VLMs are often limited by their use of pretrained image encoders, like CLIP, leading to image understanding errors that hinder overall performance. On top of that, real-world applications often require the model to be continuously adapted as new and often limited data continuously arrive. 
To address this, we propose LoRSU (Low-Rank Adaptation with Structured Updates), a robust and computationally efficient method for selectively updating image encoders within VLMs. LoRSU introduces structured and localized parameter updates, effectively correcting performance on previously error-prone data while preserving the model’s general robustness. Our approach leverages theoretical insights to identify and update only the most critical parameters, achieving significant resource efficiency. Specifically, we demonstrate that LoRSU reduces computational overhead by over 25x compared to full VLM updates, without sacrificing performance. Experimental results on VQA tasks in the few-shot continual learning setting, validate LoRSU’s scalability, efficiency, and effectiveness, making it a compelling solution for image encoder adaptation in resource-constrained environments.
\end{abstract}
\section{Introduction}\label{sec:intro}

Large Language Models (LLMs) have revolutionized natural language understanding and generation, enabling significant advancements across diverse applications. As intelligent agents are increasingly expected to operate in real-world multimodal environments, integrating visual understanding becomes essential. Vision-Language Models (VLMs) extend LLMs by incorporating visual information, either through pre-trained vision encoders or end-to-end multimodal training. These models have demonstrated state-of-the-art performance on vision-language tasks such as visual question answering (VQA) and image captioning, highlighting their potential for general-purpose multimodal reasoning \cite{chen2024internvl,Qwen2VL}.

Approaches that rely on pretrained image encoders typically use variants of the CLIP model \citep{radford2021learning}, which is kept frozen in the vision-language binding process~\cite{liu2024visual}. CLIP is a widely deployed vision transformer that has strong zero-shot capabilities in various tasks and domains. However several  existing works have highlighted various weaknesses of CLIP on out of domain data~\cite{liu2024visual,zhu2023minigpt,chen2023minigpt,li2023blip,tong2024eyes}. When deploying VLMs as visual assistants in new domains, it is then expected that VLMs can be updated using a few images gathered from the target environment whenever deficiencies are noted. 

Continual learning allows a model to be continuously updated as new data from new tasks or domains are encountered.  Recent literature on continual learning (CL) of vision language models focus on updating either the LLM~\cite{srivastava2024improving} or language projection layers~\cite{das2024one}, maintaining a frozen image encoder. 

In vision language models, the LLM component provides reasoning and factual knowledge, while the image encoder's role is to extract robust and accurate visual features. In this work, we argue that adapting VLMs to new visual domains or tasks is more effective and efficient when the image encoder is updated rather than the LLM. Figure~\ref{fig:dalle_tsi} highlights this issue using images from the Toyota Smart Home dataset (TSI)~\cite{das2019toyota} dataset: in the first column, LLaVA~\cite{liu2024llava} struggles to recognize the person's action in the original image but accurately describes the same action in a generated image from OpenAI's DALL·E 2. This example underscores that the visual shift, rather than the LLM's understanding of the action, is the main source of weakness.

Motivated by the above limitations, we introduce a novel parameter-efficient fine-tuning (PEFT) method called \ours (Low-Rank Adaptation with Structured Updates) for selectively updating specific modules of the transformer blocks of image encoders within VLMs. The right column of Figure~\ref{fig:dalle_tsi} 
illustrates the (correct) responses of LLaVA after updating the image encoder separately with our method on a low-number of samples from TSI dataset compared to the pretrained LLaVA's (wrong) response.

Through extensive experiments, we demonstrate that updating the image encoder is essential for improving the performance of the VLM that relies on it. More importantly, this approach is computationally efficient, as the image encoder has significantly fewer parameters compared to the language model and the method is less prone to forgetting, especially the LLM knowledge.

We evaluated our approach on various VQA tasks comparing to state-of-the-art CL methods and the PEFT baseline LoRA\cite{hu2021lora} on various few-shot CL settings. We show significant improvements of the full VLM model on all settings and very low rates of forgetting  without using any replay buffer of data from the previous tasks. By selectively updating the image encoder, our method provides a robust and efficient solution for handling visual shifts. This targeted adaptation strategy avoids the need to modify the entire model, preserving existing knowledge whilst ensuring strong performance in new domains.

The contributions of the paper are as follows:
\begin{itemize}[noitemsep,topsep=1pt,parsep=1pt,partopsep=1pt]
    \item We propose LoRSU, a novel replay-free PEFT method tailored for few-shot continual learning.
    \item We introduce two new VQA datasets, TSI and DALLE, created to expose the limitations of pre-trained image encoders in VLMs.
    \item We conduct the first large-scale study of few-shot CL in VLMs, evaluating \ours across ten diverse VQA datasets and benchmarking against state-of-the-art PEFT and CL methods. \ours consistently outperforms all baselines.
\end{itemize}

\begin{figure}[t]
\vskip 0.2in
    \centering   \includegraphics[width=1.0\linewidth,height=4cm]{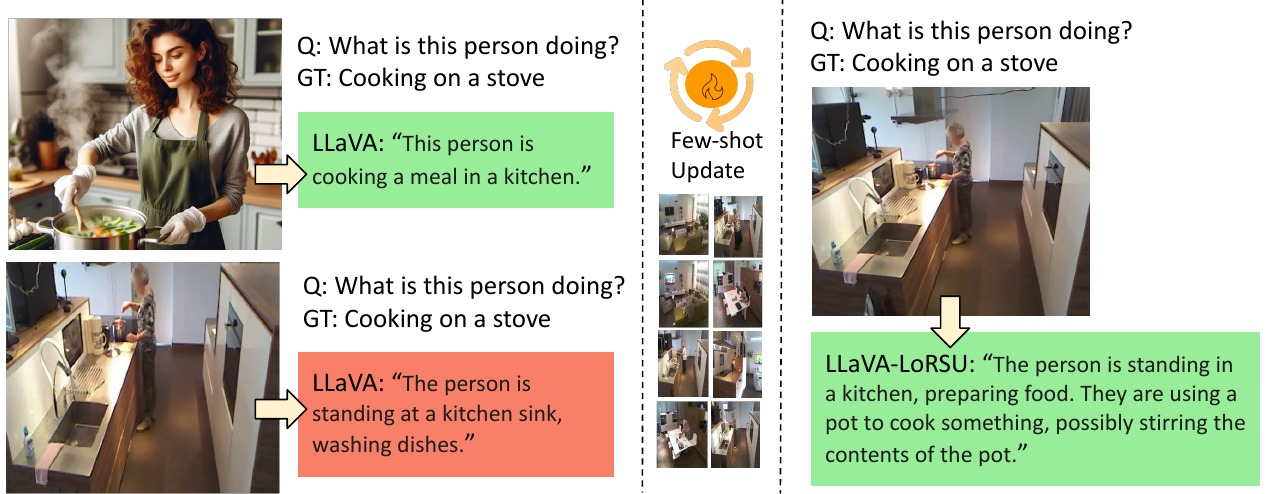}
\caption{(Left) Responses of the pretrained LLaVA to samples from  TSI dataset (bottom) compared to DALL·E 2 generated images (top) for the \textit{`cooking on a stove'} class. (Right) LLaVA’s correct response to the same TSI image after fine-tuning LLaVA using LoRSU.}
\label{fig:dalle_tsi}
\vskip -0.2in
\end{figure}

\section{Related Work}\label{sec:related_work}

\begin{figure*}
\vskip 0.2in
\centering
\includegraphics[width=0.8\textwidth]{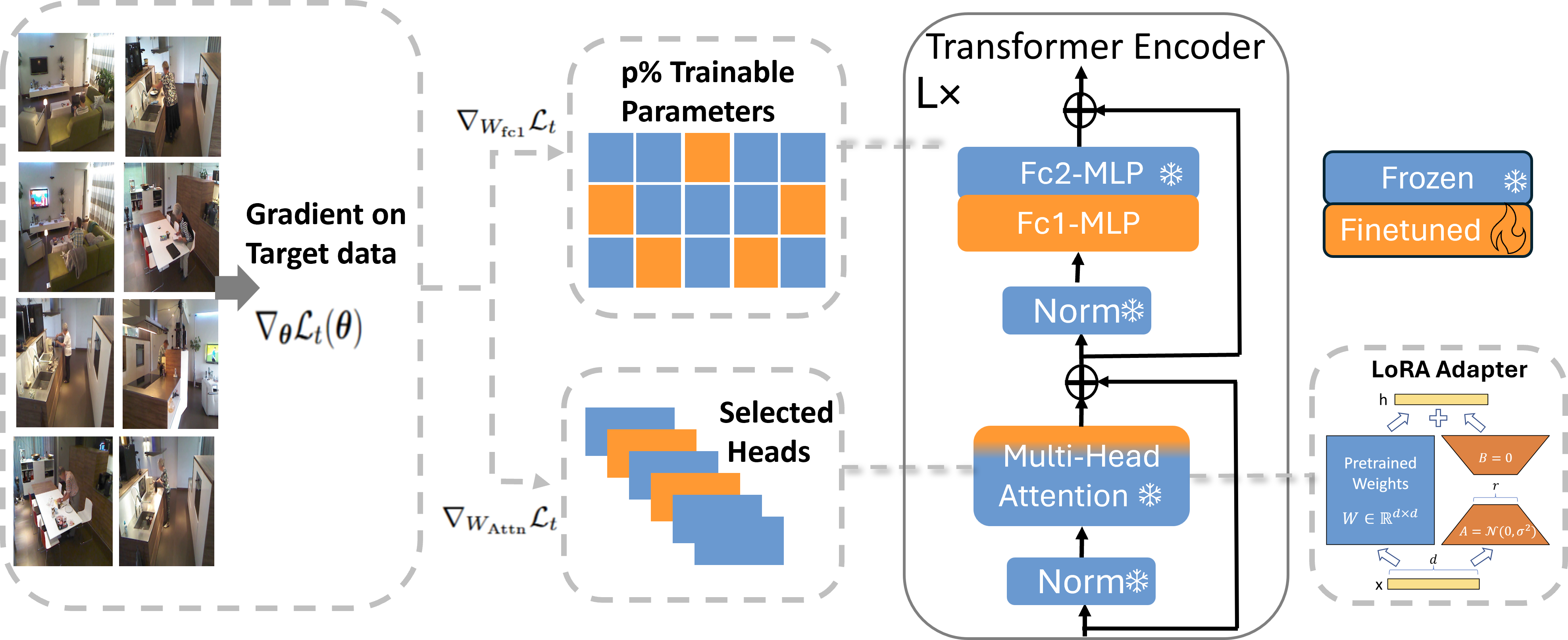}
\caption{LoRSU mechanism: After computing the gradient $\nabla_{\btheta} \cL_t (\btheta)$ over the target dataset at time $t$, LoRSU picks a small number of attention heads and a small number of paremeters from the first linear layer of the MLP module in the transformer block based on the magnitude of the gradients of $\nabla_{W_{\text{Attn}}} \cL_t $ and $\nabla_{W_{\text{fc1}}} \cL_t$, respectively. Computational efficiency is ensured by introducing LoRA adapters to the attention weight matrices.}
\label{fig:LorSU}
\vskip -0.2in
\end{figure*}
\looseness=-1\textbf{Continual Learning.} Our work falls within the continual learning literature, where a model needs to be updated incrementally as new data arrive, accumulating knowledge over tasks and reducing forgetting of previously acquired information \cite{de2021continual}.

\textbf{Continual Learning for Multimodal Language Models.} \citet{wu2024continual} provide a survey on continual learning for LLMs, highlighting challenges of computational efficiency and forgetting. \citet{srivastava2024improving} explored continual multi-modal learning on VQA datasets, keeping the vision encoder frozen. \citet{he2023continual} examined continual instruction tuning with sequential VQA datasets, proposing a method where the projection head is expanded for each new task. \citet{das2024one} introduced a pseudo-rehearsal strategy for vision-language models, updating only the language projection layer. Our method adapts only the vision encoder, preserving language capabilities.

\textbf{Continual Learning with Few-Shot Updates.} \citet{verwimp2023continual} posits that an ideal continual learning solution would enable continual correction of model's mistakes at a lower computational cost than retraining from scratch. However, most continual few-shot learning from pre-trained models focuses on classification tasks and introduces solutions that cannot scale to large multimodal models. \citet{panos2023first} update the vision encoder on the first task only, later adapting a covariance matrix for incoming tasks. \citet{goswami2024calibrating} calibrate the covariance matrix for new classes based on semantic similarity. \citet{zhao2024safe} introduce few and slow updates, proposing a transfer loss function and a cross-classification loss to mitigate catastrophic forgetting. Few-shot updates can also be viewed through the lens of model editing~\cite{Sinitsin2020Editable}. MEND~\cite{mitchell2022fast} scales model editing to large language models by transforming the gradient obtained from fine-tuning, through a low-rank decomposition fed to auxiliary networks designed to make fast, local edits to a pre-trained model, requiring a set of unrelated examples to prevent forgetting. ROME~\cite{meng2022locating} applies causal tracing to identify layers where incorrect factual knowledge is stored, applying a low-rank update. However, ROME does not scale to continual updates or non-association types of updates. \citet{cheng2024edit} studied multi-modal editing, showing negligible deterioration in multi-modal task performance when updating language models but severe forgetting when updating vision encoders. To the contrary, our method focuses on adapting the vision encoder rather than updating the factual knowledge in the LLM, yet achieving strong performance gains and negligible forgetting.

\textbf{Continual Learning of Pre-Trained Image Encoders.}~~SPT~\cite{he2023sensitivity} estimates a mask of updates based on parameter sensitivity, performing low-rank or sparse updates. SPU~\cite{zhang2024overcoming} localizes updates to the first feed-forward layer of each transformer block, inspired by knowledge neuron theory~\cite{dai2021knowledge}. Our approach generalizes updates to all layers, selecting relevant parameters and maintaining gradient norms, combined with LoRA on selected attention heads for adaptivity and stability,  achieving SOTA performance on continual fewshot multimodal tasks.

\section{\texorpdfstring{\underline{Lo}w-\underline{R}ank Adaptation with \underline{S}tructured \underline{U}pdates}{Lg}}\label{sec:lorsu_method}


Few-shot continual learning is a highly practical and challenging scenario, where models must incrementally adapt to new tasks with limited supervision while retaining previously acquired knowledge. This setting closely mirrors real-world applications, such as interactive AI assistants and autonomous systems, where models receive a continuous stream of novel data but only sparse supervision per update.

To address the challenge of efficiently fine-tuning large-scale visual encoders and transformer-based models under the few-shot continual learning setting, without causing catastrophic forgetting (i.e., degradation in performance on previously learned tasks), we propose a novel parameter-efficient fine-tuning method called \emph{Low-Rank Adaptation with Structured Updates} (\textbf{LoRSU}).

LoRSU updates specific parameters within each transformer block in a resource-efficient manner, mitigating the risk of generic knowledge loss when fine-tuning for new tasks. Specifically, we selectively update a subset of parameters from the first linear layer in the MLP block of each transformer layer, as proposed in \cite{zhang2024overcoming}. While this approach reduces the fine-tuning burden, it may limit model flexibility as the remaining parameters in the transformer block remain fixed. To enhance flexibility, we further update the most informative attention heads based on the gradient of the task-specific loss.

More specifically, let a dataset $\cD_t = \{\bx_n, \by_n \}_{n=1}^{N_t}$ for the current task $t$ where $\bx_n$ is an image with text description $\by_n$. We define $\cL(\btheta; \cD_t) := \cL_t(\btheta)$ as the  loss used for training the model and $\btheta \in \mathbb{R}^d$ is the full set of model's parameters. The standard Multi-head Self-Attention Mechanism (MSA)~\cite{vaswani2017attention}, comprised of $H$ $D_h$-dimensional heads, is defined as the concatenation of multiple self-attention (SA) blocks where $\bq^{(i)} = W_q^{(i)} Z^{\top}, \bk^{(i)} = W_k^{(i)} Z^{\top}, \bv^{(i)} = W_v^{(i)} Z^{\top} \in \mathbb{R}^{D_h \times N}$, are the query, key and value matrices, which are used to compute the self-attention outputs as follows
\begin{align}
        A^{(i)} & = \text{softmax}( \bq^{(i)^\top} \bk^{(i)} / \sqrt{D_h} )  \in \mathbb{R}^{N \times N}, \\
        \text{SA}_i(Z) & = A^{(i)} \bv^{(i)^\top}  \in \mathbb{R}^{N \times D_h}, ~~i=1, \ldots, H.
\end{align}
$Z \in \mathbb{R}^{N \times D}$ is the input matrix of $N$ tokens of dimension $D$ and $ W_q^{(i)},  W_k^{(i)},$ and $ W_k^{(i)}$ are the query, key, and value matrices of learnable parameters for head $i$, respectively. The final MSA function is defined as 
$\text{MSA}(Z) = \text{Concat}\left[ SA_1(Z), \ldots, SA_H(Z)  \right] W_o \in \mathbb{R}^{N \times D},~W_o \in \mathbb{R}^{H D_h \times D}$.

Since we care to update the parameters of the heads that cause the largest changes in  $\cL_t(\btheta)$, we compute the gradient of the loss with respect to the parameters of each head and then we update only those heads with the largest cumulative contribution to the loss change. Since the matrices $W_q^{(i)},  W_k^{(i)},  W_v^{(i)}$ are all the parameters of head $i$, we can define an importance score for each head by adding the squared values of their corresponding gradients $G_q^{(i)} = \nabla_{W_q^{(i)}} \cL_t$, $G_k^{(i)} = \nabla_{W_k^{(i)}} \cL_t$, and $G_v^{(i)} = \nabla_{W_v^{(i)}} \cL_t$, as follows
\begin{equation}
    s_i = \sum_{m,l} \left( (G_q^{(i)}[m,l])^2 + (G_k^{(i)}[m,l])^2 + (G_v^{(i)}[m,l])^2 \right). \label{eq:s_i} 
\end{equation}
We provide a theoretical justification of \eqref{eq:s_i} in the next section. We update only the top-$k$ heads, based on their importance scores $\{s_1, \ldots, s_H \}$, $I \subset \{1, \ldots, H \}$, to be updated on the current task. Nevertheless, the number of parameters remain high due to the large weight matrices. Therefore, we parametrize the original weights using LoRA~\cite{hu2021lora} to further reduce the computational burden. The matrices $W_q^{(i)}, W_k^{(i)}, W_v^{(i)}, i \in I$ are now defined as
\begin{equation}
    W_{\alpha}^{(i)^{\prime}} = W_{\alpha}^{(i)} + A_{\alpha}^{(i)} B_{\alpha}^{(i)},~~~\alpha \in \{q, k, v \}. \label{eq:lora_attn_weights}
\end{equation}
Finally, to ensure that we only update $W_q^{(i)},  W_k^{(i)},  W_v^{(i)}, \forall i \in I$ we use a binary mask on the gradient vector with respect to all parameters of all attention heads. We keep the projection matrix $W_o$ frozen. We note that most modern implementations of transformer blocks concatenate the three attention weight matrices $W_q, W_k, W_v$ into one and thus we only need to apply LoRA once to this concatenated matrix.

Regarding the first linear layer in the MLP module, $W_{\text{fc1}} \in \mathbb{R}^{d \times D}$, we mask the gradients of $W_{\text{fc1}}$ so only the most important parameters for the current task to be updated, i.e.~we use the following biased gradient update.
\begin{equation}
\hat{\nabla}_{W_{\text{fc1}}} \cL_t = M_{\text{fc1}} \odot \nabla_{W_{\text{fc1}}} \cL_t,
\end{equation}
where  $M_{\text{fc1}} \in \{0, 1 \}^{d \times D}$ is a zero-one mask that is built by choosing a proportion of the largest squared values of $\nabla_{W_{\text{fc1}}} \cL_t$ in a similar manner as in~\cite{zhang2024overcoming} and $\odot$ is the Hadamard product.

\textbf{Theoretical justification.}~~The importance scores in \eqref{eq:s_i} can be derived from the following constrained (binary) optimization problem\footnote{For notational simplicity, we assume a single transformer block for this case.}
\begin{align}
 & \bp^*  =  \argmax_{\bp \in \{0, 1 \}^d} \frac{\norm{\bp \odot \nabla_W \cL(\btheta_0) }^2}{\norm{\nabla_W \cL(\btheta_0)}^2},  \label{eq:maxim} 
\\ & \text{s.t.}~~\bigcup_{\ell=1}^G I_{\ell} \subset \{1, 2, \ldots, d\},~I_i \cap I_j = \emptyset,~~\forall i\neq j, \nonumber \\
& \text{and}~~ C = \sum_{\ell=1}^G c_{\ell},~~c_{\ell} \leq |I_{\ell}|~~\forall \ell,~~~\norm{\bp}_0 \leq C, \nonumber
\end{align}
where $\btheta_0$ is the vector of the pretrained parameters before using $\cD_t$ for fine-tuning the model. The groups of parameters $I_i$ correspond to the parameters of a specific module (e.g. Self-Attention or MLP projector) we aim to learn, hence the constraint of mutually exclusiveness, $I_i \cap I_j = \emptyset$, between different pairs of parameter groups. Also note that we allowed to choose a subset $c_{\ell}$ of the parameters of a specific group $I_{\ell}$ which is the underneath mechanism of \ours choosing attention heads and parameters of fc1. The mask $\bp^*$ is chosen so that the gradient norm of the masked gradients is as large as possible under the sparsity constraints. We prove in appendix \ref{sec_appx:proof_mask} that the indices of the non-zero values of $\bp^*$ can be found using the importance scores in \eqref{eq:s_i} and the magnitudes of the gradients with respect to the fc1 parameters. 
\section{Experiments}\label{sec:experiments}
We conduct a series of experiments under three different few-shot continual learning (CL) settings (CL-5, CL-20, and CL-50 shots) to thoroughly investigate the performance of LoRSU based on ten VQA datasets. By adopting this paradigm, we aim to assess the adaptability and efficiency of LoRSU under constrained learning conditions, ensuring that it remains both computationally feasible and effective in improving downstream performance. Specifically, we seek to address the following key questions: 1) How does our method, LoRSU, compare to other fine-tuning and CL baselines that use the CLIP loss to update the image encoder? 2) Does updating the image encoder separately and then reintegrating it into the corresponding VLM enhance downstream VQA performance? 3) What is the effect of using the perplexity loss instead of the CLIP loss to update the image encoder? 4) What are the benefits of choosing a subset of attention heads to be fine-tuned using LoRSU? and 5) What are the computational benefits of LoRSU?

\subsection{Datasets}\label{sec:datasets}
We evaluate the performance of \ours on ten visual question answering (VQA) datasets falling in two broad categories: regular VQA datasets and classification datasets converted to VQA datasets.

\textbf{Regular VQA datasets.}~~We consider four standard VQA datasets used for benchmarking VLMs' performance~\cite{duan2024vlmevalkit}: 
\emph{VSR}~\cite{Liu2022VisualSR}, the Visual Spatial Reasoning corpus consists of caption-image pairs labeled as True or False, where each caption describes the spatial relation between two objects in the image. VLMs evaluate whether the caption accurately reflects the image.
\emph{HM}~\cite{kiela2020hateful}, the Hateful Memes dataset designed to detect multimodal hateful memes.
\emph{MMVP}~\cite{tong2024eyes}, the Multimodal Visual Patterns
dataset is a challenging benchmark which has been built on images that CLIP perceives as similar despite their clear visual differences.
\emph{VisOnly}~\cite{kamoi2024visonlyqa}, a novel dataset created to directly assess the visual perception abilities of VLMs in answering questions about geometric and numerical details in scientific figures. This dataset allows us to assess fine-grained visual perception in VLMs independently of other abilities, such as reasoning, making it the most challenging among the previously mentioned datasets.

\textbf{Classification-to-VQA datasets.}~~We convert four popular multi-class classification datasets into multiple-choice VQA problems, where each question has five choices, and the VLM is tasked with selecting the correct answer. These datasets are introduced as examples of scenarios where visual domain shifts are encountered, allowing us to examine the utility of updating the image encoder; a critical consideration often overlooked in many standard VQA datasets.
The datasets include:
\emph{GTS}~\cite{stallkamp2012man}, the German Traffic Sign dataset, which \citet{zhang2024overcoming} considered as an out-of-distribution dataset for CLIP pretraining; \emph{CAn}~\cite{wang2024clips}, a recent dataset created to test CLIP's robustness with animal images containing realistic spurious features such as unexpected backgrounds; \emph{AIR}~\cite{maji13fine-grained}, a fine-grained aircraft classification dataset; \emph{ESAT}~\cite{helber2019eurosat}, a dataset of satellite images used for land cover classification.

\textbf{TSI \& DALLE.}~~In addition to these existing datasets, we introduce two novel VQA datasets: TSI and DALLE, both designed to explore the effects of domain shift.
The \emph{TSI}~\cite{das2019toyota} dataset was preprocessed as a classification dataset, where the goal is to recognize the activity depicted in each image. Frames were extracted from videos to create a training set of approximately 10K images and a test set of approximately 5K images, encompassing 27 distinct activity classes. The \emph{DALLE} dataset, constructed by querying the OpenAI's model DALL·E 2, includes representative images generated from 22 activity classes appearing in TSI. For each activity, we generated 30 images, resulting in a total of 660 images designated exclusively for evaluation purposes. 

We follow the common practice in few-shot continual learning~\cite{panos2023first} to construct the sequences. We divide each dataset into 5 sets of disjoint classes/categories and consider 5/20/50 shot settings where only 5/20/50 images per class in the current set are used for fine-tuning the model. More details on how we split each of these datasets for the CL settings are provided in appendix~\ref{sec_appx:datasets}.

\begin{table*}[!ht]
\caption{Performance comparison of LoRSU with the CLIP loss against baselines fine-tuning the image encoder using the same loss. We report the \emph{Target Improvement} (TI) and \emph{Control Change} (CC) accuracies across three different continual learning (CL) settings. Greener shades indicate higher positive values, while redder shades signify lower negative values. The highest accuracies across methods for each dataset are underlined.}
 \label{table:clip_baselines_summary}
\vskip 0.15in
\begin{center}
\begin{small}
\begingroup
\setlength{\tabcolsep}{1.8pt}
\begin{tabular}{l c c c c c c c c c c c c c c c}
\toprule
 & & \multicolumn{14}{c}{\textbf{FT Method}}  \\
\cmidrule(lr){3-16}
\multirow{2}{*}{\textbf{Setting}} & \multirow{2}{*}{\textbf{FT Dataset}}  &  \multicolumn{2}{c}{\textbf{LN}} &  \multicolumn{2}{c}{\textbf{F-FT}} &  \multicolumn{2}{c}{\textbf{F-EWC}} &  \multicolumn{2}{c}{\textbf{LoRA}} &  \multicolumn{2}{c}{\textbf{AdaLoRA}} &  \multicolumn{2}{c}{\textbf{SPU}} &  \multicolumn{2}{c}{\textbf{LoRSU}} \\
\cmidrule(lr){3-4} \cmidrule(lr){5-6} \cmidrule(lr){7-8} \cmidrule(lr){9-10} \cmidrule(lr){11-12} \cmidrule(lr){13-14} \cmidrule(lr){15-16} & & \textbf{TI ($\uparrow)$} & \textbf{CC ($\uparrow)$} & \textbf{TI ($\uparrow)$} & \textbf{CC ($\uparrow)$} & \textbf{TI ($\uparrow)$} & \textbf{CC ($\uparrow)$} & \textbf{TI ($\uparrow)$} & \textbf{CC ($\uparrow)$} & \textbf{TI ($\uparrow)$} & \textbf{CC ($\uparrow)$} & \textbf{TI ($\uparrow)$} & \textbf{CC ($\uparrow)$} & \textbf{TI ($\uparrow)$} & \textbf{CC ($\uparrow)$} \\
\midrule
\multirow{5}{*}{\textbf{CL-5}} & \textbf{GTS} & \colorbox{green!30}{\phantom{-1}3.5} & \colorbox{red!12}{\phantom{1}-1.5} & \colorbox{green!30}{\phantom{-1}3.7} & \colorbox{red!70}{\phantom{1}-6.5} & \colorbox{green!}{\phantom{-1}5.0} & \colorbox{red!70}{-11.5} & \colorbox{green!15}{\phantom{-1}0.7} & \colorbox{red!70}{\phantom{1}-4.8} & \colorbox{red!25}{\phantom{1}-0.9} & \colorbox{red!70}{\phantom{1}-4.9} & \colorbox{green!}{\phantom{-1}5.4} & \colorbox{red!6}{\phantom{1}\underline{-0.6}} & \colorbox{green!}{\phantom{-1}\underline{6.4}} & \colorbox{red!12}{\phantom{1}-0.7} \\
& \textbf{TSI} & \colorbox{green!15}{\phantom{-1}0.8} & \colorbox{green!25}{\phantom{-1}0.0} & \colorbox{green!}{\phantom{-1}7.4} & \colorbox{red!12}{\phantom{1}-1.1} & \colorbox{green!}{\phantom{-1}\underline{8.5}} & \colorbox{red!12}{\phantom{1}-1.0} & \colorbox{red!25}{\phantom{1}-0.1} & \colorbox{red!50}{\phantom{1}-2.8} & \colorbox{green!15}{\phantom{-1}1.1} & \colorbox{green!}{\phantom{-1}\underline{0.2}} & \colorbox{green!15}{\phantom{-1}0.9} & \colorbox{green!50}{\phantom{-1}0.1} & \colorbox{green!30}{\phantom{-1}3.2} & \colorbox{green!50}{\phantom{-1}0.1} \\
& \textbf{CAn} & \colorbox{red!50}{\phantom{1}-2.4} & \colorbox{red!6}{\phantom{1}-0.2} & \colorbox{red!50}{\phantom{1}-2.4} & \colorbox{red!25}{\phantom{1}-2.2} & \colorbox{red!70}{-16.7} & \colorbox{red!70}{\phantom{1}-9.4} & \colorbox{red!25}{\phantom{1}-1.3} & \colorbox{red!70}{\phantom{1}-4.6} & \colorbox{red!25}{\phantom{1}-1.0} & \colorbox{red!6}{\phantom{1}-0.1} & \colorbox{red!25}{\phantom{1}-0.4} & \colorbox{green!50}{\phantom{-1}0.1} & \colorbox{green!15}{\phantom{-1}\underline{0.3}} & \colorbox{green!}{\phantom{-1}\underline{0.3}} \\
& \textbf{AIR} & \colorbox{green!15}{\phantom{-1}0.3} & \colorbox{red!12}{\phantom{1}-1.6} & \colorbox{green!15}{\phantom{-1}2.0} & \colorbox{red!50}{\phantom{1}-2.7} & \colorbox{green!30}{\phantom{-1}2.9} & \colorbox{red!50}{\phantom{1}-2.8} & \colorbox{green!15}{\phantom{-1}1.3} & \colorbox{red!50}{\phantom{1}-3.7} & \colorbox{green!15}{\phantom{-1}0.4} & \colorbox{green!25}{\phantom{-1}0.0} & \colorbox{green!30}{\phantom{-1}3.1} & \colorbox{green!50}{\phantom{-1}0.1} & \colorbox{green!30}{\phantom{-1}\underline{4.8}} & \colorbox{green!}{\phantom{-1}\underline{0.4}} \\
& \textbf{ESAT} & \colorbox{green!30}{\phantom{-1}4.2} & \colorbox{green!}{\phantom{-1}\underline{0.6}} & \colorbox{red!70}{-10.3} & \colorbox{red!12}{\phantom{1}-1.4} & \colorbox{red!70}{\phantom{1}-8.4} & \colorbox{red!25}{\phantom{1}-2.1} & \colorbox{red!25}{\phantom{1}-1.6} & \colorbox{red!12}{\phantom{1}-0.7} & \colorbox{green!15}{\phantom{-1}1.9} & \colorbox{green!25}{\phantom{-1}0.1} & \colorbox{green!30}{\phantom{-1}4.5} & \colorbox{green!50}{\phantom{-1}0.1} & \colorbox{green!}{\phantom{-1}\underline{6.8}} & \colorbox{green!}{\phantom{-1}0.2} \\
\midrule
\multirow{5}{*}{\textbf{CL-20}} & \textbf{GTS} & \colorbox{green!}{\phantom{-1}5.2} & \colorbox{red!70}{\phantom{1}-5.9} & \colorbox{green!30}{\phantom{-1}4.6} & \colorbox{red!70}{\phantom{1}-7.3} & \colorbox{green!}{\phantom{-1}6.7} & \colorbox{red!70}{-15.6} & \colorbox{green!30}{\phantom{-1}2.5} & \colorbox{red!70}{-10.5} & \colorbox{green!15}{\phantom{-1}0.2} & \colorbox{red!25}{\phantom{1}-2.2} & \colorbox{green!}{\phantom{-1}7.9} & \colorbox{red!12}{\phantom{1}-1.3} & \colorbox{green!}{\phantom{-1}\underline{8.6}} & \colorbox{red!12}{\phantom{1}\underline{-1.0}} \\
& \textbf{TSI} & \colorbox{green!}{\phantom{-1}5.1} & \colorbox{red!25}{\phantom{1}-1.9} & \colorbox{green!}{\phantom{-}15.3} & \colorbox{red!50}{\phantom{1}-3.4} & \colorbox{green!}{\phantom{-}\underline{16.0}} & \colorbox{red!70}{-32.5} & \colorbox{green!}{\phantom{-1}8.5} & \colorbox{red!70}{\phantom{1}-4.4} & \colorbox{green!15}{\phantom{-1}1.3} & \colorbox{red!70}{\phantom{1}-9.6} & \colorbox{green!}{\phantom{-1}7.8} & \colorbox{red!6}{\phantom{1}-0.3} & \colorbox{green!}{\phantom{-}10.6} & \colorbox{red!6}{\phantom{1}\underline{-0.1}} \\
& \textbf{CAn} & \colorbox{red!50}{\phantom{1}-2.4} & \colorbox{red!6}{\phantom{1}-0.4} & \colorbox{green!15}{\phantom{-1}0.3} & \colorbox{red!50}{\phantom{1}-2.9} & \colorbox{green!15}{\phantom{-1}0.1} & \colorbox{red!70}{\phantom{1}-5.1} & \colorbox{red!50}{\phantom{1}-2.3} & \colorbox{red!70}{\phantom{1}-5.4} & \colorbox{red!70}{\phantom{1}-3.5} & \colorbox{red!25}{\phantom{1}-2.5} & \colorbox{green!15}{\phantom{-1}0.1} & \colorbox{green!}{\phantom{-1}0.5} & \colorbox{green!15}{\phantom{-1}\underline{1.1}} & \colorbox{green!}{\phantom{-1}\underline{0.3}} \\
& \textbf{AIR} & \colorbox{red!25}{\phantom{1}-0.2} & \colorbox{red!50}{\phantom{1}-3.0} & \colorbox{green!}{\phantom{-1}9.3} & \colorbox{red!25}{\phantom{1}-1.8} & \colorbox{green!}{\phantom{-}\underline{10.2}} & \colorbox{red!25}{\phantom{1}-2.0} & \colorbox{green!}{\phantom{-1}5.3} & \colorbox{red!50}{\phantom{1}-2.7} & \colorbox{green!30}{\phantom{-1}2.7} & \colorbox{red!12}{\phantom{1}-0.7} & \colorbox{green!30}{\phantom{-1}3.0} & \colorbox{red!6}{\phantom{1}\underline{-0.2}} & \colorbox{green!}{\phantom{-1}5.9} & \colorbox{red!6}{\phantom{1}-0.5} \\
& \textbf{ESAT} & \colorbox{green!15}{\phantom{-1}0.9} & \colorbox{red!6}{\phantom{1}-0.1} & \colorbox{red!70}{-24.9} & \colorbox{red!25}{\phantom{1}-1.7} & \colorbox{red!70}{-22.0} & \colorbox{red!50}{\phantom{1}-3.8} & \colorbox{red!70}{-11.5} & \colorbox{red!6}{\phantom{1}-0.5} & \colorbox{red!70}{\phantom{1}-6.8} & \colorbox{red!50}{\phantom{1}-2.7} & \colorbox{green!}{\phantom{-1}5.4} & \colorbox{green!}{\phantom{-1}\underline{0.3}} & \colorbox{green!}{\phantom{-1}\underline{6.6}} & \colorbox{green!}{\phantom{-1}0.2} \\
\midrule
\multirow{5}{*}{\textbf{CL-50}} & \textbf{GTS} & \colorbox{green!30}{\phantom{-1}4.8} & \colorbox{red!70}{\phantom{1}-6.5} & \colorbox{green!30}{\phantom{-1}3.4} & \colorbox{red!70}{\phantom{1}-9.8} & \colorbox{green!}{\phantom{-1}5.3} & \colorbox{red!70}{-12.9} & \colorbox{green!30}{\phantom{-1}3.1} & \colorbox{red!70}{-11.1} & \colorbox{green!15}{\phantom{-1}1.0} & \colorbox{red!50}{\phantom{1}-3.3} & \colorbox{green!}{\phantom{-1}7.7} & \colorbox{red!12}{\phantom{1}-1.5} & \colorbox{green!}{\phantom{-1}\underline{9.7}} & \colorbox{red!12}{\phantom{1}\underline{-1.3}} \\
& \textbf{TSI} & \colorbox{green!}{\phantom{-1}7.0} & \colorbox{red!50}{\phantom{1}-3.0} & \colorbox{green!}{\phantom{-}17.2} & \colorbox{red!70}{\phantom{1}-4.6} & \colorbox{green!}{\phantom{-}\underline{22.4}} & \colorbox{red!70}{-13.4} & \colorbox{green!}{\phantom{-}18.2} & \colorbox{red!70}{\phantom{1}-6.3} & \colorbox{green!}{\phantom{-1}7.9} & \colorbox{red!25}{\phantom{1}-1.9} & \colorbox{green!}{\phantom{-}12.2} & \colorbox{red!6}{\phantom{1}-0.5} & \colorbox{green!}{\phantom{-}19.1} & \colorbox{red!6}{\phantom{1}\underline{-0.3}} \\
& \textbf{CAn} & \colorbox{red!70}{\phantom{1}-5.7} & \colorbox{red!50}{\phantom{1}-3.3} & \colorbox{red!25}{\phantom{1}-1.0} & \colorbox{red!70}{\phantom{1}-4.9} & \colorbox{green!15}{\phantom{-1}0.6} & \colorbox{red!70}{\phantom{1}-9.7} & \colorbox{red!25}{\phantom{1}-0.4} & \colorbox{red!70}{\phantom{1}-4.4} & \colorbox{red!25}{\phantom{1}-1.8} & \colorbox{red!12}{\phantom{1}-0.8} & \colorbox{green!15}{\phantom{-1}0.6} & \colorbox{red!6}{\phantom{1}\underline{-0.3}} & \colorbox{green!15}{\phantom{-1}\underline{1.3}} & \colorbox{red!6}{\phantom{1}-0.5} \\
& \textbf{AIR} & \colorbox{green!15}{\phantom{-1}1.8} & \colorbox{red!50}{\phantom{1}-3.9} & \colorbox{green!}{\phantom{-}10.0} & \colorbox{red!50}{\phantom{1}-3.1} & \colorbox{green!}{\phantom{-}\underline{10.9}} & \colorbox{red!50}{\phantom{1}-3.3} & \colorbox{green!}{\phantom{-1}7.8} & \colorbox{red!50}{\phantom{1}-3.8} & \colorbox{green!30}{\phantom{-1}4.6} & \colorbox{red!12}{\phantom{1}-0.9} & \colorbox{green!}{\phantom{-1}6.2} & \colorbox{red!12}{\phantom{1}\underline{-0.6}} & \colorbox{green!}{\phantom{-1}8.2} & \colorbox{red!12}{\phantom{1}-0.7} \\
& \textbf{ESAT} & \colorbox{green!30}{\phantom{-1}4.6} & \colorbox{green!50}{\phantom{-1}0.1} & \colorbox{red!70}{-41.4} & \colorbox{red!50}{\phantom{1}-3.3} & \colorbox{red!70}{-38.1} & \colorbox{red!25}{\phantom{1}-2.0} & \colorbox{red!70}{-14.5} & \colorbox{red!50}{\phantom{1}-3.6} & \colorbox{red!70}{-17.3} & \colorbox{red!25}{\phantom{1}-2.4} & \colorbox{green!}{\phantom{-1}5.8} & \colorbox{green!50}{\phantom{-1}0.1} & \colorbox{green!}{\phantom{-1}\underline{7.0}} & \colorbox{green!50}{\phantom{-1}\underline{0.2}} \\
\bottomrule
\end{tabular}
\endgroup
\end{small}
\end{center}
\vskip -0.1in
\end{table*}

\subsection{Experimental Setting}\label{sec:experimental_settings}

\textbf{Metrics.}~~
While standard metrics in the CL literature exist to evaluate general performance~\cite{lopez2017gradient,chaudhry2018riemannian}, VLMs exhibit generic knowledge across various domains beyond the one being adapted, making it crucial to evaluate how adaptation impacts their overall performance. These metrics do not measure the change in performance relative to the model's initial state prior to the learning process.
To address this, we use the zero-shot accuracy of each VQA dataset as the benchmark baseline and report the change in accuracy on the test split of the target dataset so positive values indicate an improvement in accuracy. This approach enables us to quantify the model's ability to accumulate knowledge, using the pretrained model as the reference point; we name this metric as \emph{Target Improvement} (TI) accuracy. We also calculate the average accuracy change on the test splits of the remaining datasets, when fine-tuning on a specific dataset, to estimate average forgetting of generic knowledge or possible positive backward transfer~\cite{de2021continual}; we call this metric \emph{Control Change} (CC) accuracy where `control' refers to the control datasets we use to calculate the average accuracy change. TI and CC are computed based on the fine-tuned VLM after the last session of CL. We also consider standard CL performance metrics such as \emph{Average Accuracy} (ACC) and \emph{Backward Transfer} (BWT)~\cite{lopez2017gradient} to examine how accuracy and forgetting evolves through continuous adaptation. Notice that these metrics, in contrast to TI and CC, focus on the accuracy and forgetting during continual adaptation and they do not take into account the performance of the fine-tuned model on other datasets.

\textbf{Implementation details.}~~Please see Appendix~\ref{sec_appx:implementation_details}.

\textbf{Models.}~~For our experiments, we consider the popular Vision Language Model LLaVA-v1.5~\cite{liu2024visual} that leverages a frozen CLIP image encoder. Specifically, LLaVA utilizes a frozen OpenAI-CLIP-L-14~\cite{radford2021learning} with a LLM (Vicuna-7b~\cite{chiang2023vicuna}). The two modules are connected through a two-layer MLP projector that aligns image and text features. The LLM and the MLP projector are optimized during the visual instruction tuning while CLIP remains frozen. LLaVA concatenates adjacent tokens from CLIP-L-14 and processes them with an MLP projector as input to LLama-2 (7B-chat)~\cite{touvron2023llama}; the MLP projector and the language model are optimized while the image encoder remains frozen. 

\textbf{Baselines.}~~We compare LoRSU to the following methods that also use the CLIP loss to fine-tune the image encoder:
\begin{itemize}[noitemsep,topsep=1pt,parsep=1pt,partopsep=1pt,leftmargin=*]
    \item \emph{LN}~\cite{perez2018film,panos2023first} is used for both few-shot and CL. Only the  image encoder LayerNorm modules' parameters are optimized.
    \item \emph{F-FT} is the standard fine-tuning technique where all image encoder parameters undergo gradient updates.
    \item \emph{F-EWC} fine-tunes all the image encoder parameters with EWC regularization~\cite{kirkpatrick2017overcoming}.
    \item \emph{LoRA}~\cite{hu2021lora} a popular PEFT method which parameterizes incremental updates by two low-dimensional matrices and only fine-tunes them.
    \item \emph{AdaLoRA}~\cite{zhang2023adalora} dynamically adjusts the low-rank update budget allocation during training.
    \item \emph{SPU}~\cite{zhang2024overcoming} is a PEFT baseline, specifically designed to tackle catastrophic forgetting in CL scenarios, that utilizes structured sparsity based on gradient information to fine-tune the most significant parameters of the fc1 module in the transformer block. 
\end{itemize}

\subsection{CLIP-based Updates}

We evaluate the performance of the Vision-Language Model (VLM) when only the image encoder is fine-tuned using the CLIP loss in a CL setting. This experiment compares six strong CLIP-based baselines with our proposed method, \ours. Table~\ref{table:clip_baselines_summary} reports the average accuracies of TI/CC over three runs; detailed results can be found in appendix~\ref{sec_appx:detailed_res}. We observe that \ours consistently achieves superior TI scores across datasets and CL settings, underscoring its ability to enhance task-specific performance effectively. Furthermore, \ours maintains CC accuracies that take consistently small negative or even positive values, highlighting its capacity to preserve or slightly improve performance on control datasets while fine-tuning on target datasets. Even in datasets where other methods struggle (e.g., CAn, ESAT), LoRSU often performs better, maintaining positive or close-to-neutral TI and CC scores. For instance, In ESAT (CL-50) containing challenging satellite images, LoRSU achieves the highest TI (7.0) with a positive CC (0.2), outperforming SPU (TI=5.8, CC=0.1) and all other methods. 

\begin{table}
\caption{\emph{Average accuracy} (ACC) ($\uparrow$) and \emph{backward transfer} (BWT) ($\uparrow$) scores (\%). For reference, the ACC of the pretrained model on GTS and ESAT is $75.4$ and $76.4$, respectively, while BWT is zero for all cases. The highest scores across methods are underlined.}
\label{table:bwt_metrics_clip_reduced}
\vskip 0.15in
\begin{center}
\begin{small}
\begingroup
\setlength{\tabcolsep}{2.7pt}
\begin{tabular}{l c c c c c c c}
\toprule
\multirow{2}{*}{\textbf{Setting}} & \multirow{2}{*}{\textbf{FT Dataset}}  &  \multicolumn{2}{c}{\textbf{LoRA}} & \multicolumn{2}{c}{\textbf{SPU}} &  \multicolumn{2}{c}{\textbf{LoRSU}} \\
\cmidrule(lr){3-4} \cmidrule(lr){5-6} \cmidrule(lr){7-8} & & \textbf{ACC} & \textbf{BWT} & \textbf{ACC} & \textbf{BWT} & \textbf{ACC} & \textbf{BWT } \\
\midrule
\multirow{2}{*}{\textbf{CL-5}} & \textbf{GTS} & 79.2 & -7.1 & 80.8 & \underline{0.5} & \underline{81.1} & 0.4 \\
& \textbf{ESAT} & 73.8 & -3.4 & 79.8 & 1.5 & \underline{82.2} & \underline{2.0} \\
\midrule
\multirow{2}{*}{\textbf{CL-20}} & \textbf{GTS} & 77.2 & -9.1 & 82.8 & -0.6 & \underline{83.5} & \underline{-0.4} \\
& \textbf{ESAT} & 64.1 & -18.3 & 82.0 & \underline{2.0} & \underline{82.7} & 0.1 \\
\midrule
\multirow{2}{*}{\textbf{CL-50}} & \textbf{GTS} & 79.3 & -10.3 & 83.8 & -0.7 & \underline{84.7} & \underline{-0.5} \\
& \textbf{ESAT} & 61.4 & -27.8 & 81.2 & -2.4 & \underline{82.1} & \underline{-0.8} \\
\bottomrule
\end{tabular}
\endgroup
\end{small}
\end{center}
\vskip -0.1in
\end{table}

\begin{table*}[!ht]
\caption{Performance comparison between LoRSU using the CLIP loss (\emph{LoRSU}) or the perplexity loss (LoRSU-Ppl) and other baselines that fine-tune only the vision encoder (\emph{LoRA, LoRA-Ppl}), only the LLM (\emph{LoRA-L}), or both of them (\emph{LoRA-F}). We report the \emph{Target Improvement} (TI) and \emph{Control Change} (CC) for each CL setting. $\dagger$ and $\ddagger$ denote classification-to-VQA and regular VQA datasets, respectively. The highest accuracies across methods for each dataset are underlined.}
 \label{table:ppl_vs_clip_summary}
\vskip 0.15in
\begin{center}
\begin{small}
\begingroup
\setlength{\tabcolsep}{3.9pt}
\begin{tabular}{l c c c c c c c c c c c c c}
\toprule
 \multirow{3}{*}{\textbf{Setting}} &  \multirow{3}{*}{\textbf{FT Dataset}} & \multicolumn{12}{c}{\textbf{FT Method}}  \\
\cmidrule(lr){3-14}
 &  & \multicolumn{2}{c}{\textbf{LoRA-L}} & \multicolumn{2}{c}{\textbf{LoRA}} &  \multicolumn{2}{c}{\textbf{LoRSU}} & \multicolumn{2}{c}{\textbf{LoRA-Ppl}} & \multicolumn{2}{c}{\textbf{LoRA-F}} & \multicolumn{2}{c}{\textbf{LoRSU-Ppl}} \\
\cmidrule(lr){3-4} \cmidrule(lr){5-6} \cmidrule(lr){7-8} \cmidrule(lr){9-10} \cmidrule(lr){11-12} \cmidrule(lr){13-14} & & \textbf{TI ($\uparrow)$} & \textbf{CC ($\uparrow)$} & \textbf{TI ($\uparrow)$} & \textbf{CC ($\uparrow)$} & \textbf{TI ($\uparrow)$} & \textbf{CC ($\uparrow)$} & \textbf{TI ($\uparrow)$} & \textbf{CC ($\uparrow)$} & \textbf{TI ($\uparrow)$} & \textbf{CC ($\uparrow)$} & \textbf{TI ($\uparrow)$} & \textbf{CC ($\uparrow)$} \\
\midrule
\multirow{8}{*}{\textbf{CL-5}} & $\textbf{GTS}^{\dagger}$ & \colorbox{red!70}{\phantom{1}-4.1} & \colorbox{red!6}{\phantom{1}\underline{-0.2}} & \colorbox{green!15}{\phantom{-1}0.7} & \colorbox{red!70}{\phantom{1}-4.8} & \colorbox{green!}{\phantom{-1}\underline{6.4}} & \colorbox{red!12}{\phantom{1}-0.7} & \colorbox{red!70}{\phantom{1}-7.5} & \colorbox{red!50}{\phantom{1}-3.0} & \colorbox{red!50}{\phantom{1}-2.7} & \colorbox{red!25}{\phantom{1}-1.8} & \colorbox{green!15}{\phantom{-1}1.6} & \colorbox{red!12}{\phantom{1}-1.0} \\
& $\textbf{TSI}^{\dagger}$ & \colorbox{green!}{\phantom{-1}6.0} & \colorbox{red!6}{\phantom{1}-0.1} & \colorbox{red!25}{\phantom{1}-0.1} & \colorbox{red!50}{\phantom{1}-2.8} & \colorbox{green!30}{\phantom{-1}3.2} & \colorbox{green!50}{\phantom{-1}0.1} & \colorbox{green!}{\phantom{-}10.9} & \colorbox{red!25}{\phantom{1}-2.4} & \colorbox{red!70}{\phantom{1}-8.0} & \colorbox{red!25}{\phantom{1}-2.4} & \colorbox{green!}{\phantom{-}\underline{13.1}} & \colorbox{green!}{\phantom{-1}\underline{1.5}} \\
& $\textbf{CAn}^{\dagger}$ & \colorbox{red!70}{\phantom{1}-3.3} & \colorbox{red!6}{\phantom{1}-0.2} & \colorbox{red!25}{\phantom{1}-1.3} & \colorbox{red!70}{\phantom{1}-4.6} & \colorbox{green!15}{\phantom{-1}\underline{0.3}} & \colorbox{green!}{\phantom{-1}\underline{0.3}} & \colorbox{red!70}{\phantom{1}-3.5} & \colorbox{red!70}{\phantom{1}-5.5} & \colorbox{red!70}{\phantom{1}-4.1} & \colorbox{red!12}{\phantom{1}-1.6} & \colorbox{green!15}{\phantom{-1}0.2} & \colorbox{red!6}{\phantom{1}-0.2} \\
& $\textbf{AIR}^{\dagger}$ & \colorbox{red!25}{\phantom{1}-1.7} & \colorbox{green!}{\phantom{-1}0.3} & \colorbox{green!15}{\phantom{-1}1.3} & \colorbox{red!50}{\phantom{1}-3.7} & \colorbox{green!30}{\phantom{-1}4.8} & \colorbox{green!}{\phantom{-1}\underline{0.4}} & \colorbox{red!25}{\phantom{1}-0.7} & \colorbox{red!12}{\phantom{1}-1.5} & \colorbox{green!}{\phantom{-1}\underline{9.6}} & \colorbox{red!25}{\phantom{1}-1.9} & \colorbox{green!}{\phantom{-1}5.8} & \colorbox{red!6}{\phantom{1}-0.2} \\
& $\textbf{ESAT}^{\dagger}$ & \colorbox{red!25}{\phantom{1}-0.2} & \colorbox{red!6}{\phantom{1}-0.1} & \colorbox{red!25}{\phantom{1}-1.6} & \colorbox{red!12}{\phantom{1}-0.7} & \colorbox{green!}{\phantom{-1}\underline{6.8}} & \colorbox{green!}{\phantom{-1}0.2} & \colorbox{red!25}{\phantom{1}-0.6} & \colorbox{green!}{\phantom{-1}\underline{0.4}} & \colorbox{green!}{\phantom{-1}5.4} & \colorbox{red!6}{\phantom{1}-0.5} & \colorbox{green!30}{\phantom{-1}3.7} & \colorbox{green!25}{\phantom{-1}0.1} \\
& $\textbf{VSR}^{\ddagger}$ & \colorbox{green!}{\phantom{-}16.8} & \colorbox{red!6}{\phantom{1}-0.6} & \colorbox{green!15}{\phantom{-1}0.5} & \colorbox{red!50}{\phantom{1}-4.0} & \colorbox{green!15}{\phantom{-1}0.4} & \colorbox{green!}{\phantom{-1}\underline{0.2}} & \colorbox{green!}{\phantom{-}10.2} & \colorbox{red!70}{-12.5} & \colorbox{green!}{\phantom{-}\underline{18.0}} & \colorbox{red!70}{-10.6} & \colorbox{green!}{\phantom{-}10.5} & \colorbox{red!12}{\phantom{1}-1.2} \\
& $\textbf{HM}^{\ddagger}$ & \colorbox{green!}{\phantom{-1}\underline{7.4}} & \colorbox{red!50}{\phantom{1}-2.7} & \colorbox{red!25}{\phantom{1}-0.4} & \colorbox{red!70}{\phantom{1}-6.8} & \colorbox{green!15}{\phantom{-1}0.6} & \colorbox{green!}{\phantom{-1}\underline{0.4}} & \colorbox{red!25}{\phantom{1}-1.2} & \colorbox{red!12}{\phantom{1}-1.2} & \colorbox{green!}{\phantom{-1}6.0} & \colorbox{red!70}{\phantom{1}-4.5} & \colorbox{red!25}{\phantom{1}-0.8} & \colorbox{green!}{\phantom{-1}0.2} \\
& $\textbf{VisOnly}^{\ddagger}$ & \colorbox{red!25}{\phantom{1}-0.4} & \colorbox{red!6}{\phantom{1}-0.1} & \colorbox{red!25}{\phantom{1}-1.1} & \colorbox{red!70}{\phantom{1}-4.5} & \colorbox{green!15}{\phantom{-1}0.9} & \colorbox{green!50}{\phantom{-1}0.1} & \colorbox{green!15}{\phantom{-1}0.3} & \colorbox{red!6}{\phantom{1}-0.3} & \colorbox{green!15}{\phantom{-1}0.2} & \colorbox{red!6}{\phantom{1}-0.4} & \colorbox{green!30}{\phantom{-1}\underline{2.7}} & \colorbox{green!}{\phantom{-1}\underline{0.7}} \\
\midrule
\multirow{8}{*}{\textbf{CL-20}} & $\textbf{GTS}^{\dagger}$ & \colorbox{red!25}{\phantom{1}-1.4} & \colorbox{green!25}{\phantom{-1}\underline{0.1}} & \colorbox{green!30}{\phantom{-1}2.5} & \colorbox{red!70}{-10.5} & \colorbox{green!}{\phantom{-1}\underline{8.6}} & \colorbox{red!12}{\phantom{1}-1.0} & \colorbox{red!25}{\phantom{1}-0.5} & \colorbox{red!70}{\phantom{1}-6.4} & \colorbox{red!25}{\phantom{1}-1.4} & \colorbox{red!12}{\phantom{1}-0.8} & \colorbox{green!30}{\phantom{-1}3.9} & \colorbox{red!12}{\phantom{1}-0.7} \\
& $\textbf{TSI}^{\dagger}$ & \colorbox{green!}{\phantom{-1}5.9} & \colorbox{green!25}{\phantom{-1}\underline{0.0}} & \colorbox{green!}{\phantom{-1}8.5} & \colorbox{red!70}{\phantom{1}-4.4} & \colorbox{green!}{\phantom{-}10.6} & \colorbox{red!6}{\phantom{1}-0.1} & \colorbox{green!}{\phantom{-1}6.5} & \colorbox{red!70}{-11.6} & \colorbox{green!30}{\phantom{-1}2.9} & \colorbox{red!50}{\phantom{1}-3.1} & \colorbox{green!}{\phantom{-}\underline{13.9}} & \colorbox{red!6}{\phantom{1}-0.6} \\
& $\textbf{CAn}^{\dagger}$ & \colorbox{red!25}{\phantom{1}-1.9} & \colorbox{red!6}{\phantom{1}-0.6} & \colorbox{red!50}{\phantom{1}-2.3} & \colorbox{red!70}{\phantom{1}-5.4} & \colorbox{green!15}{\phantom{-1}\underline{1.1}} & \colorbox{green!}{\phantom{-1}\underline{0.3}} & \colorbox{red!70}{\phantom{1}-3.7} & \colorbox{red!70}{\phantom{1}-8.8} & \colorbox{red!50}{\phantom{1}-2.1} & \colorbox{red!25}{\phantom{1}-1.7} & \colorbox{green!15}{\phantom{-1}0.5} & \colorbox{red!12}{\phantom{1}-1.2} \\
& $\textbf{AIR}^{\dagger}$ & \colorbox{green!30}{\phantom{-1}3.7} & \colorbox{green!}{\phantom{-1}\underline{0.3}} & \colorbox{green!}{\phantom{-1}5.3} & \colorbox{red!50}{\phantom{1}-2.7} & \colorbox{green!}{\phantom{-1}5.9} & \colorbox{red!6}{\phantom{1}-0.5} & \colorbox{green!30}{\phantom{-1}4.8} & \colorbox{red!50}{\phantom{1}-3.5} & \colorbox{green!}{\phantom{-}\underline{16.3}} & \colorbox{red!6}{\phantom{1}-0.3} & \colorbox{green!}{\phantom{-1}6.0} & \colorbox{red!6}{\phantom{1}-0.3} \\
& $\textbf{ESAT}^{\dagger}$ & \colorbox{green!15}{\phantom{-1}0.7} & \colorbox{green!}{\phantom{-1}\underline{0.4}} & \colorbox{red!70}{-11.5} & \colorbox{red!6}{\phantom{1}-0.5} & \colorbox{green!}{\phantom{-1}\underline{6.6}} & \colorbox{green!}{\phantom{-1}0.2} & \colorbox{red!25}{\phantom{1}-1.2} & \colorbox{red!6}{\phantom{1}-0.1} & \colorbox{red!70}{\phantom{1}-4.6} & \colorbox{red!6}{\phantom{1}-0.0} & \colorbox{green!30}{\phantom{-1}2.9} & \colorbox{red!6}{\phantom{1}-0.1} \\
& $\textbf{VSR}^{\ddagger}$ & \colorbox{green!}{\phantom{-}22.2} & \colorbox{green!}{\phantom{-1}\underline{1.0}} & \colorbox{green!15}{\phantom{-1}0.4} & \colorbox{red!50}{\phantom{1}-3.9} & \colorbox{green!15}{\phantom{-1}0.1} & \colorbox{red!6}{\phantom{1}-0.2} & \colorbox{green!}{\phantom{-}19.5} & \colorbox{red!6}{\phantom{1}-0.3} & \colorbox{green!}{\phantom{-}\underline{23.3}} & \colorbox{red!70}{\phantom{1}-5.1} & \colorbox{green!}{\phantom{-}22.9} & \colorbox{red!12}{\phantom{1}-1.6} \\
& $\textbf{HM}^{\ddagger}$ & \colorbox{green!}{\phantom{-}10.6} & \colorbox{red!25}{\phantom{1}-2.2} & \colorbox{red!25}{\phantom{1}-1.8} & \colorbox{red!70}{\phantom{1}-5.8} & \colorbox{green!15}{\phantom{-1}0.7} & \colorbox{green!}{\phantom{-1}\underline{0.2}} & \colorbox{green!}{\phantom{-}10.7} & \colorbox{red!6}{\phantom{1}-0.1} & \colorbox{green!}{\phantom{-}\underline{11.7}} & \colorbox{red!12}{\phantom{1}-1.4} & \colorbox{green!}{\phantom{-}10.9} & \colorbox{red!6}{\phantom{1}-0.2} \\
& $\textbf{VisOnly}^{\ddagger}$ & \colorbox{red!50}{\phantom{1}-2.3} & \colorbox{green!}{\phantom{-1}\underline{0.7}} & \colorbox{red!25}{\phantom{1}-1.0} & \colorbox{red!70}{\phantom{1}-4.7} & \colorbox{green!15}{\phantom{-1}0.2} & \colorbox{green!50}{\phantom{-1}0.1} & \colorbox{red!25}{\phantom{1}-2.0} & \colorbox{green!}{\phantom{-1}0.5} & \colorbox{red!25}{\phantom{1}-1.0} & \colorbox{green!}{\phantom{-1}0.2} & \colorbox{green!15}{\phantom{-1}\underline{1.7}} & \colorbox{green!}{\phantom{-1}0.5} \\
\midrule
\multirow{8}{*}{\textbf{CL-50}} & $\textbf{GTS}^{\dagger}$ & \colorbox{red!25}{\phantom{1}-0.7} & \colorbox{red!6}{\phantom{1}\underline{-0.3}} & \colorbox{green!30}{\phantom{-1}3.1} & \colorbox{red!70}{-11.1} & \colorbox{green!}{\phantom{-1}\underline{9.7}} & \colorbox{red!12}{\phantom{1}-1.3} & \colorbox{red!25}{\phantom{1}-1.4} & \colorbox{red!70}{\phantom{1}-6.7} & \colorbox{red!70}{\phantom{1}-3.9} & \colorbox{red!25}{\phantom{1}-2.1} & \colorbox{green!}{\phantom{-1}6.9} & \colorbox{red!6}{\phantom{1}-0.4} \\
& $\textbf{TSI}^{\dagger}$ & \colorbox{green!}{\phantom{-1}9.9} & \colorbox{red!6}{\phantom{1}\underline{-0.0}} & \colorbox{green!}{\phantom{-}18.2} & \colorbox{red!70}{\phantom{1}-6.3} & \colorbox{green!}{\phantom{-}19.1} & \colorbox{red!6}{\phantom{1}-0.4} & \colorbox{red!25}{\phantom{1}-1.6} & \colorbox{red!70}{-16.5} & \colorbox{green!}{\phantom{-}15.1} & \colorbox{red!12}{\phantom{1}-0.7} & \colorbox{green!}{\phantom{-}\underline{22.0}} & \colorbox{red!12}{\phantom{1}-1.1} \\
& $\textbf{CAn}^{\dagger}$ & \colorbox{red!25}{\phantom{1}-1.8} & \colorbox{red!12}{\phantom{1}-0.7} & \colorbox{red!25}{\phantom{1}-0.4} & \colorbox{red!70}{\phantom{1}-4.4} & \colorbox{green!15}{\phantom{-1}\underline{1.3}} & \colorbox{red!6}{\phantom{1}\underline{-0.5}} & \colorbox{red!25}{\phantom{1}-1.8} & \colorbox{red!70}{\phantom{1}-9.8} & \colorbox{red!50}{\phantom{1}-2.1} & \colorbox{red!12}{\phantom{1}-1.1} & \colorbox{green!15}{\phantom{-1}1.0} & \colorbox{red!50}{\phantom{1}-3.4} \\
& $\textbf{AIR}^{\dagger}$ & \colorbox{green!30}{\phantom{-1}4.6} & \colorbox{green!}{\phantom{-1}\underline{0.4}} & \colorbox{green!}{\phantom{-1}7.8} & \colorbox{red!50}{\phantom{1}-3.8} & \colorbox{green!}{\phantom{-1}8.2} & \colorbox{red!12}{\phantom{1}-0.7} & \colorbox{green!}{\phantom{-1}6.2} & \colorbox{red!50}{\phantom{1}-3.1} & \colorbox{green!}{\phantom{-}\underline{17.9}} & \colorbox{red!12}{\phantom{1}-0.9} & \colorbox{green!}{\phantom{-1}8.9} & \colorbox{red!6}{\phantom{1}-0.4} \\
& $\textbf{ESAT}^{\dagger}$ & \colorbox{green!15}{\phantom{-1}1.0} & \colorbox{green!}{\phantom{-1}\underline{0.2}} & \colorbox{red!70}{-14.5} & \colorbox{red!50}{\phantom{1}-3.6} & \colorbox{green!}{\phantom{-1}\underline{7.0}} & \colorbox{green!50}{\phantom{-1}0.2} & \colorbox{green!15}{\phantom{-1}1.7} & \colorbox{green!50}{\phantom{-1}0.2} & \colorbox{red!70}{\phantom{1}-9.5} & \colorbox{red!6}{\phantom{1}-0.6} & \colorbox{red!25}{\phantom{1}-0.7} & \colorbox{red!6}{\phantom{1}-0.5} \\
& $\textbf{VSR}^{\ddagger}$ & \colorbox{green!}{\phantom{-}21.9} & \colorbox{green!}{\phantom{-1}1.0} & \colorbox{green!15}{\phantom{-1}0.4} & \colorbox{red!70}{\phantom{1}-4.5} & \colorbox{green!15}{\phantom{-1}2.3} & \colorbox{red!6}{\phantom{1}-0.3} & \colorbox{green!}{\phantom{-}20.2} & \colorbox{red!70}{\phantom{1}-5.3} & \colorbox{green!}{\phantom{-}21.0} & \colorbox{green!}{\phantom{-1}\underline{1.1}} & \colorbox{green!}{\phantom{-}\underline{23.4}} & \colorbox{red!50}{\phantom{1}-3.6} \\
& $\textbf{HM}^{\ddagger}$ & \colorbox{green!}{\phantom{-}10.2} & \colorbox{red!25}{\phantom{1}-2.1} & \colorbox{green!15}{\phantom{-1}0.7} & \colorbox{red!70}{\phantom{1}-4.5} & \colorbox{green!15}{\phantom{-1}0.3} & \colorbox{green!50}{\phantom{-1}0.2} & \colorbox{green!}{\phantom{-}12.5} & \colorbox{red!12}{\phantom{1}-1.5} & \colorbox{green!}{\phantom{-}\underline{12.3}} & \colorbox{red!50}{\phantom{1}-3.7} & \colorbox{green!}{\phantom{-}12.2} & \colorbox{green!}{\phantom{-1}\underline{0.2}} \\
& $\textbf{VisOnly}^{\ddagger}$ & \colorbox{red!50}{\phantom{1}-2.4} & \colorbox{green!}{\phantom{-1}0.6} & \colorbox{red!25}{\phantom{1}-0.2} & \colorbox{red!70}{\phantom{1}-6.8} & \colorbox{green!15}{\phantom{-1}0.3} & \colorbox{red!6}{\phantom{1}-0.1} & \colorbox{red!25}{\phantom{1}-2.0} & \colorbox{green!}{\phantom{-1}\underline{0.7}} & \colorbox{green!15}{\phantom{-1}0.2} & \colorbox{green!}{\phantom{-1}0.2} & \colorbox{green!15}{\phantom{-1}\underline{0.3}} & \colorbox{green!25}{\phantom{-1}0.1} \\
\bottomrule
\end{tabular}
\endgroup
\end{small}
\end{center}
\vskip -0.1in
\end{table*}

\textbf{Additional metrics.}~~We assess the performance of LoRSU against LoRA and SPU in terms of ACC and BWT across two out-of-domain datasets, GTS and ESAT. Since LoRA and SPU have similar number of trainable parameters as LoRSU and competitive performance in our previous experiment, we choose those for comparison. Table~\ref{table:bwt_metrics_clip_reduced} shows that LoRSU's  performs well with respect to these metrics, following similar patterns as TI and CC in Table~\ref{table:clip_baselines_summary}. LoRSU achieves the best performance on ACC while exhibiting minimal forgetting with the least negative BWT values. Similar patterns are observed on extra datasets in appendix~\ref{sec_appx:extra_bwt_results}. 

\subsection{CLIP-based vs. Perplexity-based Updates}
Traditionally, LLMs and VLMs achieve impressive performance through fine-tuning with the perplexity loss. LoRA is the standard PEFT method for this purpose, and thus, we consider three extra LoRA variants plus \emph{LoRSU-Ppl} which all utilize the perplexity loss to update the model. 
\begin{itemize}[noitemsep,topsep=1pt,parsep=1pt,partopsep=1pt,leftmargin=*]
    \item \emph{LoRA-L} applies LoRA adapters to all weight matrices of the LLM and thus perplexity loss is required.
    \item \emph{LoRA-Ppl} is the same method as LoRA but this time the perplexity loss is used to update the adapters.
    \item \emph{LoRA-F} applies LoRA adapters to all weight matrices of the LLM, the image encoder, and the MLP projector.
\end{itemize}

We aevaluate how LoRSU and LoRA perform compared to their perplexity-based counterparts, LoRSU-Ppl and LoRA-Ppl, respectively. Furthermore, we seek to explore how these methods compare to parameter-efficient fine-tuning approaches when either the entire VLM (LoRA-F) or only the LLM component (LoRA-L) is updated.

The results in Table \ref{table:ppl_vs_clip_summary} highlight the strong and robust performance of LoRSU and LoRSU-Ppl compared to other baseline methods across various settings. Both LoRSU and LoRSU-Ppl achieve minimal negative or even positive changes in CC, indicating reduced catastrophic forgetting and improved retention of generic knowledge compared to baselines. The table reports the average accuracies of TI/CC over three runs with exact results provided in appendix~\ref{sec_appx:detailed_res}.

The use of the perplexity loss in LoRSU-Ppl demonstrates a considerable improvement in TI accuracy over LoRSU when fine-tuned for VQA datasets. For instance, LoRSU-Ppl achieves 10\% higher TI accuracy than LoRSU on VSR. We hypothesize that the perplexity loss acts as an additional signal that optimizes the image encoder to complement the frozen language model more effectively, improving the alignment between visual and textual modalities in VQA.

However, we observe that LoRSU achieves a balance between task-specific improvements and generalization, consistently demonstrating higher CC accuracy compared to LoRSU-Ppl across most datasets. Lastly, although LoRA-F achieves high TI scores on many datasets, it suffers significantly from forgetting, underscoring the importance of LoRSU's structured updates in CL scenarios.

\subsection{The Choice of Attention Heads}
\begin{table}
\caption{Comparison of the importance of choosing a small subset of attention heads. The GTS dataset is used for fine-tuning. We include error bars over 3 runs. The highest accuracies across methods are underlined.}
 \label{table:lorsu_attn_vs_rand_summary}
\vskip 0.15in
\begin{center}
\begin{small}
\begingroup
\setlength{\tabcolsep}{3.5pt}
\begin{tabular}{l c c c c}
\toprule
\textbf{Setting} & \textbf{Scores} & \textbf{LoRSU-Rand} & \textbf{LoRSU-AAH} & \textbf{LoRSU} \\
\midrule
\multirow{2}{*}{\textbf{CL-5}} & \textbf{TI ($\uparrow)$}  & 4.1 \mtiny{\pm 0.4} & 5.9 \mtiny{\pm 0.8} & \underline{6.4 \mtiny{\pm 1.3}} \\
 & \textbf{CC ($\uparrow)$}  & -1.0 \mtiny{\pm 0.5} & -0.9 \mtiny{\pm 0.3} & \underline{-0.7 \mtiny{\pm 0.6}} \\
\midrule
\multirow{2}{*}{\textbf{CL-20}} & \textbf{TI ($\uparrow)$}  & 6.2 \mtiny{\pm 0.6} & 7.5 \mtiny{\pm 0.6} & \underline{8.6 \mtiny{\pm 0.9}} \\
 & \textbf{CC ($\uparrow)$}  & -1.4 \mtiny{\pm 0.3} & \underline{-0.7 \mtiny{\pm 0.4}} & -1.0 \mtiny{\pm 0.5} \\
\midrule
\multirow{2}{*}{\textbf{CL-50}} & \textbf{TI ($\uparrow)$}  & 7.8 \mtiny{\pm 0.4} & 9.1 \mtiny{\pm 0.1} & \underline{9.7 \mtiny{\pm 0.1}} \\
 & \textbf{CC ($\uparrow)$}  & -1.7 \mtiny{\pm 0.2} & \underline{-0.9 \mtiny{\pm 0.2}} & -1.3 \mtiny{\pm 0.1} \\
\bottomrule
\end{tabular}
\endgroup
\end{small}
\end{center}
\vskip -0.1in
\end{table}
Given that LoRSU's mechanism of choosing attention heads is a key-point to its success, we conduct an ablation study on the different strategies for selecting attention heads during the fine-tuning process. In this experiment, we compare LoRSU's performance to two new variants of LoRSU, namely, \emph{LoRSU-Rand} and \emph{LoRSU-AAH}. LoRSU-Rand randomly chooses the number of attention heads ($k=2$ heads) to be fine-tuned while LoRSU-AAH fine-tunes all the available attention heads (16 in total) in each transformer block. For extra results on the sensitivity of the number of LoRSU's optimal attention heads $k$, see appendix~\ref{sec_appx:ablation_heads_general}.

The results in Table~\ref{table:lorsu_attn_vs_rand_summary} demonstrate that  LoRSU's targeted approach is performant, balancing task-specific improvements (TI) and the retention of generic knowledge (CC).
Random selection (LoRSU-Rand) fails to generalize well, while fine-tuning all attention heads (LoRSU-AHH) adds unnecessary computational overhead with less effective generalization. LoRSU outperforms both of the variants in TI while LoRSU-AHH is marginally better in CC. Additional experiments that investigate the robustness of \ours in terms of the number of training epochs can be found in appendix~\ref{sec_appx:robustness}; ablation studies of other hyperparameters of LoRSU are given in appendix~\ref{sec_appx:extra_ablations}.

\begin{figure}
\vskip 0.2in
\centering
\includegraphics[width=0.48\textwidth]{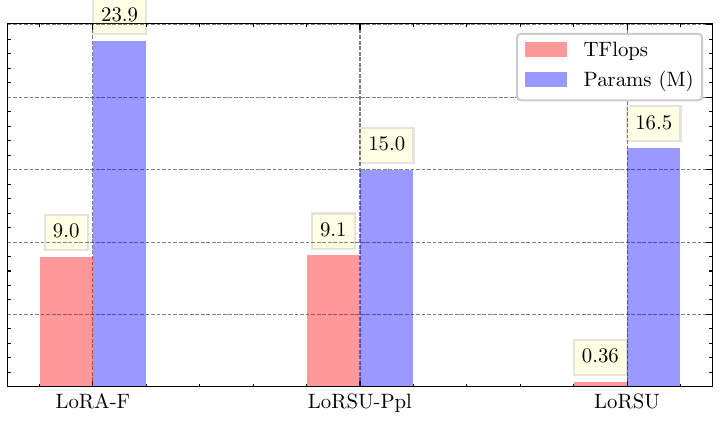}
\caption{TFlops and trainable parameters comparison between LoRSU with CLIP loss (LoRSU), perplexity loss (LoRSU-Ppl), and LoRA-F.}
\label{fig:tflops_comparisons}
\vskip -0.2in
\end{figure}
\subsection{Computational Efficiency}
In Figure \ref{fig:tflops_comparisons}, we asses the computational benefits of LoRSU using the CLIP loss comparing to baseline methods. We focus on two key metrics: trainable parameters and floating-point operations per second (TFLOPs). 

\ours requires 25× fewer computation resources than LoRA-F and LoRSU-Ppl, demonstrating the suitability of using CLIP loss when computational resources are limited. Unlike the perplexity loss, which necessitates forward and backward passes through both the vision encoder and LLM, the CLIP loss operates solely on the vision encoder, significantly reducing computational overhead. This makes LoRSU more scalable, enabling efficient continual learning even in resource-constrained environments.

\section{Discussion}\label{sec:discussion}
In this work, we introduced LoRSU, a novel parameter-efficient fine-tuning method, specifically designed for few-shot continual learning scenarios with VLMs. Unlike existing approaches, LoRSU operates without relying on a replay buffer, making it uniquely suited for resource-constrained settings. Through extensive experiments, we demonstrate that LoRSU achieves both computational efficiency and the preservation of the model’s generic knowledge by using localized and structured updates. LoRSU outperforms 12 baselines in over 80\% of evaluations across 10 datasets and 3 settings, achieving the highest TI accuracies in most cases while maintaining stable or even positive CC accuracies. To the best of our knowledge, we are the first to explore few-shot continual learning of VLMs.

Whilst we focus on CLIP and LLaVA due to computational constraints, our method is generic to any transformer model, and we plan to extend it to other VLMs and image encoders. Another promising direction is using a smaller LLM proxy model in perplexity-based methods like LoRSU-Ppl, which has shown strong VQA performance. This could improve scalability and LoRSU's use in resource-limited settings. Finally, LoRSU’s binary mask-based structured updates ensure efficient, precise parameter updates, but scaling to larger architectures like LLMs poses challenges. Replacing binary masks with more scalable solutions for vast parameter spaces will be crucial to manage memory and processing demands, offering opportunities for further refinement.

\section*{Impact Statement}

Authors are \textbf{required} to include a statement of the potential 
broader impact of their work, including its ethical aspects and future 
societal consequences. This statement should be in an unnumbered 
section at the end of the paper (co-located with Acknowledgements -- 
the two may appear in either order, but both must be before References), 
and does not count toward the paper page limit. In many cases, where 
the ethical impacts and expected societal implications are those that 
are well established when advancing the field of Machine Learning, 
substantial discussion is not required, and a simple statement such 
as the following will suffice:

``This paper presents work whose goal is to advance the field of 
Machine Learning. There are many potential societal consequences 
of our work, none which we feel must be specifically highlighted here.''

The above statement can be used verbatim in such cases, but we 
encourage authors to think about whether there is content which does 
warrant further discussion, as this statement will be apparent if the 
paper is later flagged for ethics review.

\bibliography{refs}
\bibliographystyle{icml2025}

\newpage
\appendix
\onecolumn

\section{Proof of the optimal mask $\bp^*$}\label{sec_appx:proof_mask}
\begin{definition}\label{def:tops}
The operator TOP-$C: \mathbb{R}^d \rightarrow \mathbb{R}^d$, for $1 \leq C \leq d$ is defined as
\begin{equation}\
    \left( \text{TOP-}C(\bx) \right)_{\pi(i)} := \left\{
\begin{array}{ll}
      x_{\pi(i)}, & i \leq C \\
      0, & \text{otherwise},\\
\end{array} 
\right. \nonumber
\end{equation}
where $\bx = (x_1, \ldots, x_d)^{\top} \in \RR^d$ and $\pi$ is a permutation of $\{1, 2, \ldots, d \}$ such that $|x_{\pi(i)}| \geq |x_{\pi(i+1)}|$, for $i=1, \ldots, d-1$, i.e. the TOP-$S$ operator keeps only the $S$ largest elements of $\bx$ in magnitude and truncates the rest to zero.
\end{definition}

\begin{lemma}\label{lemma:max_norm}
For any $\bx \in \mathbb{R}^d-\{ \mathbf{0}\}$, $1 \leq C \leq d$, the optimal mask
\begin{align}
     \bp^* & =  \argmax_{\bp \in \{0, 1 \}^d} \frac{\norm{\bp \odot \bx }^2}{\norm{\bx}^2},~~\text{s.t.}~\norm{\bp}_0 \leq C,  \nonumber
\end{align}
has zeros everywhere except the $C$ largest elements of $\bx$ in magnitude.
\begin{proof}
    Rewriting the optimization problem as 
    \begin{equation}
        \max_{\bp \in \{0, 1 \}^d} \sum_{i=1}^d p_i x_i^2,~~ \text{s.t.}~\sum_{i=1}^d p_i \leq C \nonumber,
    \end{equation}
    Notice that this is a trivial binary knapsack problem with maximum weight capacity $C$ and weights equal to one. Hence, the maximum is attained when we pick the top $C$ maximal $x_i^2$ elements. 
\end{proof}
\end{lemma}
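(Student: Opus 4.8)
The plan is to reduce the ratio-maximization to a transparent combinatorial selection problem by exploiting the binary structure of $\bp$. First I would note that the denominator $\norm{\bx}^2$ is a strictly positive constant independent of $\bp$ (this is where $\bx \neq \mathbf{0}$ is used), so maximizing the ratio is equivalent to maximizing the numerator $\norm{\bp \odot \bx}^2$ on its own.

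Next I would expand the numerator coordinatewise. Since $(\bp \odot \bx)_i = p_i x_i$ and $p_i \in \{0,1\}$ forces $p_i^2 = p_i$, the numerator collapses to $\sum_{i=1}^d p_i x_i^2$. The sparsity constraint likewise simplifies to $\norm{\bp}_0 = \sum_{i=1}^d p_i \leq C$, because each active coordinate contributes exactly one to the count. This yields precisely the linear-objective reformulation displayed in the statement, now a problem in nonnegative terms $x_i^2$ selected by a budget-$C$ binary vector.

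The core step is then to recognize this as a binary ($0$--$1$) knapsack instance with unit weights, item values $x_i^2 \geq 0$, and capacity $C$. Because every value is nonnegative, the objective is monotone in the selected index set, so it is never disadvantageous to exhaust the budget; an optimal $\bp$ therefore activates exactly $C$ coordinates, and it selects those with the largest values $x_i^2$, i.e. the $C$ coordinates of $\bx$ of largest magnitude, matching the $\text{TOP-}C$ operator of Definition~\ref{def:tops}.

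I do not expect a genuine obstacle here, since the argument is elementary; the only subtlety is handling ties, where several coordinates share the $C$-th largest magnitude and the maximizer is not unique. This does not affect the claim, as any selection of $C$ largest-magnitude coordinates attains the same optimal value. If a fully formal justification of optimality is wanted, a one-line exchange argument suffices: if a feasible $\bp$ omits some top-$C$ coordinate while including one outside the top $C$, swapping them cannot decrease $\sum_i p_i x_i^2$, and iterating this swap transforms any optimizer into a top-$C$ selection.
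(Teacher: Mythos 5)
Your proposal is correct and follows essentially the same route as the paper: drop the constant denominator, use $p_i^2 = p_i$ to reduce the objective to $\sum_i p_i x_i^2$, and observe that this is a unit-weight binary knapsack solved by selecting the $C$ largest $x_i^2$. The extra remarks on ties and the exchange argument are sound but only elaborate on what the paper's proof treats as immediate.
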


\begin{remark}

\end{remark} It holds that $\text{TOP-}S(\bx) = \bp^* \odot \bx$.

\begin{corollary}
The optimal mask $\bp^*$ in \eqref{eq:maxim} has zeros everywhere except for the indices $i \in \{j: \exists \ell \in \{1, \ldots, G \},~\text{such that}~j \in \{ \pi_{\ell}(1), \ldots,  \pi_{\ell}(c_{\ell})\}    \}$, where $\pi_{\ell}$ is the same permutation as in Definition $\ref{def:tops}$ for the set of indices $I_{\ell}
$.
\end{corollary}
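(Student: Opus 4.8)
The plan is to reduce the grouped problem \eqref{eq:maxim} to $G$ decoupled instances of Lemma~\ref{lemma:max_norm}, one per group, and then apply the lemma to each. Write $x_i := \big(\nabla_W \cL(\btheta_0)\big)_i$ for brevity. First I would note that the denominator $\norm{\nabla_W \cL(\btheta_0)}^2$ is a strictly positive constant independent of $\bp$, so the maximizer of the ratio coincides with the maximizer of the numerator $\norm{\bp \odot \nabla_W \cL(\btheta_0)}^2 = \sum_{i=1}^d p_i\, x_i^2$. Since the eligible indices are exactly $\bigcup_{\ell=1}^G I_\ell$ (entries outside the union are held at zero) and the groups are pairwise disjoint, this objective splits cleanly as
\begin{equation}
\sum_{i=1}^d p_i\, x_i^2 = \sum_{\ell=1}^G \sum_{i \in I_\ell} p_i\, x_i^2, \nonumber
\end{equation}
a sum of terms that share no variables $p_i$ across different $\ell$.

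Next I would record the per-group counts of the optimizer, $c_\ell := \sum_{i \in I_\ell} p_i^*$, so that $\sum_{\ell=1}^G c_\ell = \norm{\bp^*}_0 \le C$ and $c_\ell \le |I_\ell|$. The key observation is that, because both the objective and the sparsity bookkeeping are separable across the disjoint groups, the optimal mask must itself be optimal when restricted to each group for that group's own count $c_\ell$: if within some group $I_\ell$ the support of $\bp^*$ were not the $c_\ell$ largest $|x_i|$, one could swap a selected index of smaller magnitude for an unselected index of larger magnitude in the same group, strictly increasing $\sum_{i \in I_\ell} p_i x_i^2$ while leaving $c_\ell$ (hence global feasibility) untouched — contradicting optimality.

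Finally, restricting attention to a single group $I_\ell$, maximizing $\sum_{i \in I_\ell} p_i x_i^2$ subject to activating $c_\ell$ indices is precisely the problem solved by Lemma~\ref{lemma:max_norm} applied to the subvector $(x_i)_{i \in I_\ell}$ with budget $c_\ell$. The lemma then yields that the optimal within-group support is the set of the $c_\ell$ largest entries in magnitude, namely $\{\pi_\ell(1), \ldots, \pi_\ell(c_\ell)\}$ with $\pi_\ell$ the ordering of Definition~\ref{def:tops} for $I_\ell$. Taking the union of these supports over $\ell = 1, \ldots, G$ recovers exactly the index set in the statement, and all remaining entries of $\bp^*$ are zero.

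I expect the only real subtlety — rather than a genuine obstacle — to be making the decoupling rigorous: one must confirm that separability of both the objective and the constraint across disjoint groups legitimately lets the global maximization factor into independent per-group maximizations, which is precisely what the exchange argument in the second step secures. Once that factorization is in hand, the per-group reduction to Lemma~\ref{lemma:max_norm} is immediate.
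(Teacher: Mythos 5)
Your proof is correct and follows essentially the same route as the paper's: the paper's own argument is a one-line appeal to the mutual disjointness of the groups $I_{\ell}$ together with Lemma~\ref{lemma:max_norm}, which is exactly the decoupling-plus-per-group-application you carry out. Your exchange argument simply makes explicit the separability step that the paper leaves implicit, so there is no substantive difference in approach.
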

\begin{proof}
    The result follows from the mutual exclusiveness of $I_{\ell}$ in the constraints of \eqref{eq:maxim} and Lemma \ref{lemma:max_norm}.
\end{proof}

\section{Implementation Details}\label{sec_appx:implementation_details}
We describe below the implementation details of section~\ref{sec:experiments}.
\begin{itemize}
    \item All the experiments are conducted on a single NVIDIA A100 GPU.
    \item We have included error bars over three runs for all experiments.
    \item We use PyTorch~\cite{paszke2019pytorch} to implement all the algorithms.
    \item We use Adam~\citep{kingma2014adam} as an optimizer for the methods that utilize the CLIP loss for fine tuning and AdamW~\citep{loshchilov2017decoupled} for those ones that use the perplexity loss. 
    \item A learning rate scheduler of Cosine Annealing with Warmup is employed for all methods.
    \item For all experiments, we set the learning rate $1 \times 10^{-5}$ and $2 \times 10^{-5}$, for LoRSU and LoRSU-Ppl, respectively.
    \item We set batch size to 16 for all methods that fine-tune the vision encoder through CLIP loss. We reduce the batch size to 8 for those methods that fine-tune the vision encoder through perplexity loss or those that fine-tune the LLM. This was due to GPU memory limitations.
    \item All methods run for 20, 15, and 10 epochs for the CL-5, CL-10, and CL-50 settings, respectively.
    \item For LoRA (-Ppl), we set rank $r=64$ while LoRA-L and LoRA-F use $r=8$, for all experiments.
    \item For AdaLoRA, we set the initial rank to 70 and the final average rank to 64.
    \item The adapters of LoRA and AdaLoRA are applied to all weight matrices of each of the transformer blocks.
    \item For SPU, we use sparsity=15\% for all experiments.
    \item For \ours (-Ppl) we use sparsity=10\%, rank=64, and we pick the top-2 attention heads for all experiments.   
\end{itemize}
The choice of the above hyperparameters ensures that LoRA (-Ppl), LoRA-L, LoRA-F, AdaLoRA. SPU, and LoRSU (-Ppl) have similar number of trainable parameters. 

\section{Datasets}\label{sec_appx:datasets}
Details on all datasets used in section~\ref{sec:experiments} are presented here. 

\subsection{TSI \& DALLE}
We start with the description of how we constructed our newly introduced VQA datasets \emph{TSI} and \emph{DALLE}.

\paragraph{TSI.}  To extract  images from the videos of the Toyota Smart Home dataset (TSI), we discretized each video clip into 2 frames per second and then selected the frame in the middle of the total time duration of the video clip. In Table~\ref{table:tsi_class_names} we describe the actions that were selected and the corresponding prompt used for CLIP classification. We also note dropping few actions to avoid ambiguous classes.
\begin{table}
\caption{The original action names of the Toyota Smarthome dataset and their corresponding captions used to create the Toyota Smarthome Images (TSI) dataset. We use~\xmark~to denore the actions that are ambiguous and were not used to build the TSI dataset. The final prompt is created as ``\textit{The person in this image is \{caption\}}''.}
\label{table:tsi_class_names}
\vskip 0.15in
\begin{center}
\begingroup
\begin{tabular}{l c }
\toprule
\textbf{Original Class name/Action} & \textbf{Generated Caption}  \\
\midrule
Cook.Cleandishes & washing dishes \\
Cook.Cleanup & cleaning up \\
Cook.Cut & cutting food \\
Cook.Stir & stirring the pot \\
Cook.Usestove & \xmark \\
Cook.Cutbread & cutting bread \\
Drink.Frombottle & holding a bottle \\
Drink.Fromcan & holding a can \\
Drink.Fromcup & holding a cup \\
Drink.Fromglass & holding a glass \\
Eat.Attable & eating \\
Eat.Snack & \xmark \\
Enter & walking \\
Getup & \xmark \\
Laydown & lying down \\
Leave & walking \\
Makecoffee.Pourgrains & using a white coffee machine \\
Makecoffee.Pourwater & using a white coffee machine \\
Maketea.Boilwater & boiling water in a black kettle \\
Maketea.Insertteabag & making tea \\
Pour.Frombottle & holding a bottle \\
Pour.Fromcan & holding a can \\
Pour.Fromkettle & holding a black kettle \\
Readbook & reading a book \\
Sitdown & sitting down \\
Takepills & \xmark \\
Uselaptop & using a laptop \\
Usetablet & using a tablet \\
Usetelephone & using a cordless phone \\
Walk & walking \\
WatchTV & watching TV \\
\bottomrule
\end{tabular}
\endgroup
\end{center}
\vskip -0.1in
\end{table}

\paragraph{DALLE.} We generated images from DALL·E 2 using OpenAI python package and we used the prompt
 ``\textit{A person} $\{a\}$'' where $a \in $ \{ \textit{using a white coffee machine, 
                 eating, 
                 cutting bread, 
                 stirring the pot, 
                 holding a glass, 
                 watching TV, 
                 holding a bottle, 
                 walking, 
                 making tea, 
                 cutting food, 
                 holding a cup, 
                 using a laptop, 
                 lying down, 
                 holding a can, 
                 person holding a black kettle, 
                 reading a book, 
                 cleaning up, 
                 sitting down, 
                 using a tablet, 
                 boiling water in a black kettle, 
                 using a cordless phone, 
                 washing dishes}\}.

In Table~\ref{table:datasets_num_data}, we present the average number of images per session used to update the model for each CL setting. Finally, Table~\ref{table:datasets_num_data_evaluation} provides characteristics of the datasets used for evaluating performance.

\subsection{Continual Learning Splits}
For the continual learning settings of section~\ref{sec:experiments}, we split all datasets into five non-overlapping continual learning (CL) splits based on the classes/categories of each dataset. Unless stated otherwise, we use the training split of each dataset to construct these CL splits.

\paragraph{GTS~\cite{stallkamp2012man}.} We split the 43 classes of GTS as follows:
\begin{itemize}
    \item \emph{Session 1:} $\left[ 25, 2, 11,  1, 40, 27,  5,  9, 17 \right]$.
    \item \emph{Session 2:} $\left[ 32, 29, 20, 39, 21, 15, 23, 10, 3 \right]$.
    \item \emph{Session 3:} $\left[ 18, 38, 42, 14, 22, 35, 34, 19, 33 \right]$.
    \item \emph{Session 4:} $\left[ 12, 26, 41, 0, 37, 6, 13, 24 \right]$.
    \item \emph{Session 5:} $\left[ 30, 28, 31, 7, 16, 4, 36, 8 \right]$.
\end{itemize}

\paragraph{TSI~\cite{das2019toyota}.} We split the 27 action categories of TSI as follows:
\begin{itemize}
    \item \emph{Session 1:} [\textit{WatchTV}, \textit{Laydown}, \textit{Sitdown}, \textit{Pour.Fromkettle}, \textit{Enter}, \textit{Drink.Frombottle}].
    \item \emph{Session 2:} [\textit{Eat.Attable}, \textit{Pour.Frombottle}, \textit{Cook.Cleandishes}, \textit{Maketea.Boilwater}, \textit{Leave}, \textit{Cook.Cleanup}].
    \item \emph{Session 3:} [\textit{Maketea.Insertteabag}, \textit{Makecoffee.Pourwater}, \textit{Drink.Fromcan}, \textit{Readbook}, \textit{Cutbread}].
    \item \emph{Session 4:} [\textit{Drink.Fromcup}, \textit{Drink.Fromglass}, \textit{Usetablet}, \textit{Pour.Fromcan}, \textit{Usetelephone}].
    \item \emph{Session 5:} [\textit{Walk}, \textit{Cook.Stir}, \textit{Makecoffee.Pourgrains}, \textit{Cook.Cut}, \textit{Uselaptop}].
\end{itemize}

\paragraph{CAn~\cite{wang2024clips}.} The 45 classes of CAn are split as follows:
\begin{itemize}
    \item \emph{Session 1:} $\left[ 102, 9, 20, 56, 23, 30, 357, 291, 144 \right]$.
    \item \emph{Session 2:} $\left[ 41, 293, 42, 49, 54, 57, 70, 279, 305 \right]$.
    \item \emph{Session 3:} $\left[ 71, 10, 76, 79, 349, 16, 81, 83, 100 \right]$.
    \item \emph{Session 4:} $\left[ 130, 30, 133, 150, 275, 276, 58, 277, 80 \right]$.
    \item \emph{Session 5:} $\left[ 39, 290, 37, 296, 316, 337, 89, 360, 128 \right]$.
\end{itemize}
The indices of CAn correspond to those of ImageNet~\cite{imagenet} since the dataset was built based on these 45 animal classes of ImageNet.

\paragraph{AIR~\cite{maji13fine-grained}.} We split the 100 aircraft types of AIR as follows:
\begin{itemize}
    \item \emph{Session 1:} $\left[ 23, 8, 11, 7, 48, 13, 1, 91, 94, 54, 16, 63, 52, 41, 80, 2, 47, 87, 78, 66 \right]$.
    \item \emph{Session 2:} $\left[ 19, 6, 24, 10, 59, 30, 22, 29, 83, 37, 93, 81, 43, 99, 86, 28, 34, 88, 44, 14 \right]$.
    \item \emph{Session 3:} $\left[ 84, 70, 4, 20, 15, 21, 31, 76, 57, 67, 73, 50, 69, 25, 98, 46, 96, 0, 72, 35 \right]$.
    \item \emph{Session 4:} $\left[ 58, 92, 3, 95, 56, 90, 26, 40, 55, 89, 75, 71, 60, 42, 9, 82, 39, 18, 77, 68 \right]$.
    \item \emph{Session 5:} $\left[ 32, 79, 12, 85, 36, 17, 64, 27, 74, 45, 61, 38, 51, 62, 65, 33, 5, 53, 97, 49 \right]$.
\end{itemize}

\paragraph{ESAT~\cite{helber2019eurosat}.} We split the 10 different land terrain classes of ESAT as follows:
\begin{itemize}
    \item \emph{Session 1:} $\left[ 0, 1 \right]$.
    \item \emph{Session 2:} $\left[ 2, 3 \right]$.
    \item \emph{Session 3:} $\left[ 4, 5 \right]$.
    \item \emph{Session 4:} $\left[ 6, 7 \right]$.
    \item \emph{Session 5:} $\left[ 8, 9 \right]$.
\end{itemize}

\paragraph{DALLE.} This dataset was only used for performance evaluation (control dataset), and not fine-tuning.

\paragraph{VSR~\cite{Liu2022VisualSR}.} The images of this VQA dataset are labeled according to 36 different categories that describe the dominant object of the image. We create the CL splits as follows:
\begin{itemize}
    \item \emph{Session 1:} [\textit{oven}, \textit{dining table}, \textit{spoon}, \textit{boat}, \textit{cake}, \textit{donut}, \textit{sandwich}].
    \item \emph{Session 2:} [\textit{fire hydrant}, \textit{elephant}, \textit{airplane}, \textit{truck}, \textit{apple}, \textit{hot dog}, \textit{sheep}].
    \item \emph{Session 3:} [\textit{kite}, \textit{baseball glove}, \textit{cow}, \textit{tie}, \textit{scissors}, \textit{toaster}, \textit{tv}].
    \item \emph{Session 4:} [\textit{bicycle}, \textit{banana}, \textit{couch}, \textit{teddy bear}, \textit{bus}, \textit{umbrella}, \textit{bird}].
    \item \emph{Session 5:} [\textit{potted plant}, \textit{bowl}, \textit{broccoli}, \textit{bottle}, \textit{knife}, \textit{orange}, \textit{person}, \textit{pizza}].
\end{itemize}

\paragraph{HM~\cite{kiela2020hateful}.} For the hateful memes dataset, since there was not any labeling information of the images so we can spli the images in a meaningful way, we randomly split the training images into five disjoint sets to create our final CL splits.

\paragraph{MMVP~\cite{tong2024eyes}.} This is the only dataset where no training split is available and it is comprised of just 300 images. For this reason, we only used it for evaluation in our experiments in the main paper. However, for completeness, we included results in Table~\ref{table:fine_tune_llm_mmvp} where we fine-tune on it. We use 150 images for training which are equally split into five sessions and the rest of the 150 images are used for evaluation. Thus, the setting can be considered as a 30-shot CL setting. 

\paragraph{VisOnly~\cite{kamoi2024visonlyqa}.} This dataset categorizes its samples into seven categories describing the nature of the geometric and numerical information in scientific figures. We created the splits as follows:
\begin{itemize}
    \item \emph{Session 1:} \textit{Geometry-Triangle}.
    \item \emph{Session 2:} \textit{Geometry-Quadrilateral}.
    \item \emph{Session 3:} \textit{Geometry-Length}
    \item \emph{Session 4:} \textit{Geometry-Angle}.
    \item \emph{Session 5:} [\textit{Geometry-Area}, \textit{3D-Size}, \textit{3D-Angle}].
\end{itemize}

\begin{table}
\caption{Average number of images per session (5 sessions in total) for each dataset used for fine-tuning.}
 \label{table:datasets_num_data}
\vskip 0.15in
\begin{center}
\begingroup
\setlength{\tabcolsep}{9.7pt}
\begin{tabular}{l c c c c c c c c c}
\toprule
 & \multicolumn{8}{c}{\textbf{FT Dataset}}  \\
\cmidrule(lr){2-9}
\textbf{Setting} & \textbf{GTS} & \textbf{TSI} & \textbf{CAn} & \textbf{AIR} & \textbf{ESAT}  & \textbf{VSR} & \textbf{HM} & \textbf{VisOnly} \\
\midrule
\textbf{CL-5} & $43.0$ & $27.0$ & $45.0$ & $100.0$ & $10.0$ & $100.0$ & $100.0$ & $7.0$ \\
\midrule
 \textbf{CL-20} & $170.0$ & $84.0$ & $180.0$ & $400.0$ & $40.0$ & $274.6$ & $300.0$ & $28.0$ \\
\midrule
 \textbf{CL-50} & $430.0$ & $253.8$ & $450.0$ & $1000.0$ & $100.0$ & $485.2$ & $600.0$ & $70.0$ \\
\bottomrule
\end{tabular}
\endgroup
\end{center}
\end{table}

\begin{table}
\caption{Characteristics of the datasets used for performance  evaluation in section~\ref{sec:experiments}.}
 \label{table:datasets_num_data_evaluation}
\vskip 0.15in
\begin{center}
\begingroup
\setlength{\tabcolsep}{6.7pt}
\begin{tabular}{l c c c c c c c c c c c c c c}
\toprule
\textbf{Eval Datasets} & \textbf{GTS} & 
 \textbf{TSI} & \textbf{CAn} & \textbf{AIR} & \textbf{ESAT} & \textbf{DALLE} & \textbf{VSR} & \textbf{HM} & \textbf{MMVP} & \textbf{VisOnly} \\
\midrule
\textbf{\# Samples} & $3,990$ & $4,908$ & $1,796$ & $3,333$ & $17,000$ & $660$ & $1,222$ & $2,000$ & $150$ & $1,150$ \\
\textbf{\# Classes} & $43$ & $27$ & $45$ & $100$ & $10$ & $27$ & $36$ & NaN & NaN & $7$ \\
\bottomrule
\end{tabular}
\endgroup
\end{center}
\vskip -0.1in
\end{table}

\section{Detailed Results}\label{sec_appx:detailed_res}

\subsection{CLIP-based Updates+}
The detailed accuracies for all baselines and datasets used to create Table~\ref{table:clip_baselines_summary} of the main paper can be found in Tables~\ref{table:vlm_vqa_acc_gtsrb_clip} through~\ref{table:vlm_vqa_acc_eurosat_clip}.
\begin{table}
\caption{Accuracy scores (\%) for LLaVA with the pretrained (\emph{Zr-Shot}) or fine-tuned image encoder. All baselines use \emph{GTS} dataset for fine-tuning the image encoder~(the LLM remains frozen) via CLIP loss. We include error bars over 3 runs.}
 \label{table:vlm_vqa_acc_gtsrb_clip}
\vskip 0.15in
\begin{center}
\begin{small}
\begingroup
\setlength{\tabcolsep}{3.9pt}

\endgroup
\end{center}
\vskip -0.1in
\end{table*}

In Table~\ref{table:bwt_metrics_clip_full} we present results of the ACC and BWT on extra datasets plus the ones in the main paper. The results follow the same patterns as in section~\ref{sec:experiments} with LoRSU demonstrating the most consistent performance in both ACC and BWT compared to the other two baselines. SPU is close to \ours in terms of BWT but it significantly lacks behind in ACC.

\subsection{CLIP-based vs. Perplexity-based Updates+}
The detailed accuracies for all baselines and datasets used to create Table~\ref{table:ppl_vs_clip_summary} of the main paper can be found in Tables~\ref{table:fine_tune_llm_gtsrb} through~\ref{table:fine_tune_llm_eurosat}. We have also included results on fine-tuning the model using \emph{MMVP} dataset in Table~\ref{table:fine_tune_llm_mmvp}.
\begin{table}
\caption{Exact accuracy scores (\%) for each baseline used to fine-tune the model on the \emph{GTS} dataset under three different continual learning (5, 10, 50 shots)  settings. We include error bars over 3 runs.}
 \label{table:fine_tune_llm_gtsrb}
\vskip 0.15in
\begin{center}
\begin{small}
\begingroup
\setlength{\tabcolsep}{3.6pt}

\endgroup
\end{small}
\end{center}
\vskip -0.1in
\end{table}

\section{Extra Ablation Studies}\label{sec_appx:extra_ablations}

\subsection{Ablation on the rank $r$ of LoRSU}
In Table~\ref{table:ablation_ranks}, we investigate the effect on performance of using different ranks for LoRSU. As the rank $r$ increases, the VQA accuracy on the target dataset slightly improves, peaking at $r=64$. Beyond 
that, performance slightly decreases. Performance on other datasets remains relatively stable with small fluctuations.
\begin{table}
\caption{Ablation study over the effect of the rank $r$ used by \emph{LoRSU} to fine-tune the image encoder, CLIP-L-14. We report the VQA accuracies of the last session in the \emph{50-shot} CL setting. The accuracies on the target dataset are in red color. For this experiment, we use two attention heads to fine-tune with LoRSU.}
 \label{table:ablation_ranks}
\vskip 0.15in
\begin{center}
\begingroup
\setlength{\tabcolsep}{6.7pt}
\begin{tabular}{l c c c c c c c c c c c}
\toprule
 & & \multicolumn{9}{c}{\textbf{VQA Datasets (Acc \%)}}  \\
\cmidrule(lr){3-12}
\textbf{FT Dataset} & \textbf{rank ($r$)}  & \textbf{GTS} & \textbf{TSI} & \textbf{CAn} & \textbf{AIR} & \textbf{ESAT} & \textbf{DALLE} & \textbf{VSR} & \textbf{HM} & \textbf{MMVP} & \textbf{VisOnly} \\
\midrule
\multirow{6}{*}{\textbf{GTS}} & \textbf{8} & \textcolor{red}{$83.0$} & $53.2$ & $81.3$ & $60.9$ & $61.0$ & $91.2$ & $51.5$ & $61.6$ & $60.0$ & $31.6$ \\
 & \textbf{16} & \textcolor{red}{$83.9$} & $53.4$ & $81.5$ & $60.2$ & $54.0$ & $91.4$ & $51.5$ & $62.1$ & $60.7$ & $31.6$ \\
 & \textbf{32} & \textcolor{red}{$84.8$} & $53.1$ & $81.9$ & $60.5$ & $58.0$ & $90.6$ & $51.6$ & $61.8$ & $58.7$ & $31.5$ \\
 & \textbf{64} & \textcolor{red}{$84.9$} &  $53.2$ & $81.3$ & $60.7$ & $61.7$ & $90.9$ & $51.5$ & $61.9$ & $59.3$ & $31.3$ \\
 & \textbf{128} & \textcolor{red}{$84.3$} & $53.2$ & $81.8$ & $60.6$ & $56.8$ & $91.5$ & $51.6$ & $61.8$ & $58.7$ & $31.2$ \\
 & \textbf{256} & \textcolor{red}{$84.5$} & $53.1$ & $81.5$ & $61.1$ & $51.5$ & $90.3$ & $51.6$ & $62.0$ & $58.7$ & $31.6$ \\
\midrule
\multirow{6}{*}{\textbf{TSI}} & \textbf{8} & $75.2$ & \textcolor{red}{$67.2$} & $82.0$ & $59.2$ & $71.6$ & $91.1$ & $51.5$ & $61.6$ & $58.0$ & $31.5$ \\
 & \textbf{16} & $75.4$ & \textcolor{red}{$68.0$} & $82.3$ & $59.1$ & $71.0$ & $90.6$ & $51.6$ & $61.6$ & $56.7$ & $31.2$ \\
 & \textbf{32} & $74.9$ & \textcolor{red}{$68.9$} & $81.8$ & $59.3$ & $70.1$ & $91.2$ & $51.5$ & $61.6$ & $58.0$ & $31.6$ \\
 & \textbf{64} & $75.3$ & \textcolor{red}{$72.1$} & $82.0$ & $59.3$ & $72.3$ & $90.5$ & $51.6$ & $61.4$ & $58.0$ & $31.6$ \\
 & \textbf{128} & $75.1$ & \textcolor{red}{$65.8$} & $81.7$ & $59.0$ & $70.0$ & $90.6$ & $51.5$ & $62.1$ & $56.7$ & $31.6$ \\
 & \textbf{256} & $75.4$ & \textcolor{red}{$66.4$} & $82.3$ & $59.6$ & $72.0$ & $91.2$ & $51.5$ & $62.1$ & $56.7$ & $31.5$ \\
\midrule
\midrule
\textbf{Zr-Shot} & & $75.6$ & $53.1$ & $82.7$ & $60.4$ & $76.1$ & $91.1$ & $51.5$ & $61.2$ & $58.0$ & $31.3$ \\
\bottomrule
\end{tabular}
\endgroup
\end{center}
\vskip -0.1in
\end{table}

\subsection{Ablation on the number of optimal attention heads of LoRSU}\label{sec_appx:ablation_heads_general}
\begin{table}
\caption{Ablation study over the effect of the number of attention heads used by \emph{LoRSU} to fine-tune the image encoder. We report the VQA accuracies of the last session in the \emph{50-shot} CL setting. The accuracies on the target dataset are in red color. For this experiment, we use $r=64$ for the rank of LoRSU.}
 \label{table:ablation_heads}
 \vskip 0.15in
\begin{center}
\begingroup
\setlength{\tabcolsep}{6.7pt}
\begin{tabular}{l c c c c c c c c c c c}
\toprule
 & & \multicolumn{9}{c}{\textbf{VQA Datasets (Acc \%)}}  \\
\cmidrule(lr){3-12}
\textbf{FT Dataset} & \textbf{\# heads}  & \textbf{GTS} & \textbf{TSI} & \textbf{CAn} & \textbf{AIR} & \textbf{ESAT} & \textbf{DALLE} & \textbf{VSR} & \textbf{HM} & \textbf{MMVP} & \textbf{VisOnly} \\
\midrule
\multirow{6}{*}{\textbf{GTS}} & \textbf{0} & \textcolor{red}{$83.1$} & $52.7$ & $82.2$ & $60.8$ & $60.6$ & $91.1$ & $51.6$ & $61.7$ & $59.3$ & $31.6$ \\
 & \textbf{1} &  \textcolor{red}{$83.9$} & $53.8$ & $82.0$ & $60.7$ & $55.4$ & $91.2$ & $51.6$ & $61.6$ & $60.0$ & $31.8$ \\
 & \textbf{2} &  \textcolor{red}{$84.9$} & $53.2$ & $81.3$ & $60.7$ & $61.7$ & $90.9$ & $51.5$ & $61.9$ & $59.3$ & $31.3$ \\
 & \textbf{4} &  \textcolor{red}{$84.7$} & $53.5$ & $81.0$ & $60.5$ & $60.5$ & $90.6$ & $51.5$ & $61.8$ & $58.7$ & $31.5$ \\
 & \textbf{8} &  \textcolor{red}{$84.9$} & $52.9$ & $81.2$ & $60.5$ & $58.8$ & $90.5$ & $51.5$ & $61.6$ & $59.3$ & $31.5$ \\
 & \textbf{16} &  \textcolor{red}{$85.0$} & $53.1$ & $81.3$ & $60.0$ & $59.2$ & $90.6$ & $51.5$ & $61.6$ & $56.7$ & $31.3$ \\
\midrule
\multirow{6}{*}{\textbf{TSI}} & \textbf{0} & $75.1$ &  \textcolor{red}{$64.2$} & $82.1$ & $59.3$ & $72.2$ & $90.8$ & $51.5$ & $61.8$ & $57.3$ & $31.5$ \\
 & \textbf{1} & $75.3$ &  \textcolor{red}{$64.8$} & $81.9$ & $59.5$ & $74.0$ & $90.5$ & $51.5$ & $61.6$ & $58.0$ & $32.0$ \\
 & \textbf{2} & $75.3$ &  \textcolor{red}{$72.1$} & $82.0$ & $59.3$ & $72.3$ & $90.5$ & $51.6$ & $61.4$ & $58.0$ & $31.6$ \\
 & \textbf{4} & $74.9$ &  \textcolor{red}{$66.8$} & $82.2$ & $58.9$ & $74.0$ & $90.5$ & $51.5$ & $62.1$ & $58.0$ & $31.4$ \\
 & \textbf{8} & $74.7$ &  \textcolor{red}{$67.4$} & $81.7$ & $59.1$ & $71.5$ & $91.2$ & $51.5$ & $62.2$ & $58.0$ & $31.7$ \\
 & \textbf{16} & $75.3$ &  \textcolor{red}{$65.2$} & $81.8$ & $59.9$ & $69.1$ & $90.5$ & $51.5$ & $61.6$ & $58.0$ & $31.3$ \\
\midrule
\midrule
\textbf{Zr-Shot} & & $75.6$ & $53.1$ & $82.7$ & $60.4$ & $76.1$ & $91.1$ & $51.5$ & $61.2$ & $58.0$ & $31.3$ \\
\bottomrule
\end{tabular}
\endgroup
\end{center}
\vskip -0.1in
\end{table}
In Table~\ref{table:ablation_heads}, we examine how the number of attention heads chosen to be fine-tuned affects LoRSU's performance. We notice that more attention heads marginally improve the performance of the model while the extra flexibility can cause more forgetting, e.g. ESAT.

\subsection{Robustness on the Choice of Attention Heads}\label{sec_appx:robustness}
\begin{table}
\caption{Robustness comparison of LoRSU with respect to the number of training epochs. We consider LoRSU, \emph{LoRSU-Rand} where the $k$ attention heads are chosen randomly and \emph{LoRSU-AAH} where all the attention heads are chosen for fine tuning. We use \emph{50 shots} on the \emph{GTS} for each method and we report the Target Improvement (\emph{TI}) on this dataset and the Control Change (\emph{CC}) using only ESAT as a control dataset.  We include error bars over 3 runs.}
 \label{table:lorsu_attn_rand_epochs}
\vskip 0.15in
\begin{center}
\begingroup
\setlength{\tabcolsep}{7.9pt}
\begin{tabular}{l c c c c c c}
\toprule
\multirow{2}{*}{\textbf{\# Epochs}} &  \multicolumn{2}{c}{\textbf{LoRSU-Rand}} & \multicolumn{2}{c}{\textbf{LoRSU-AAH}}  &  \multicolumn{2}{c}{\textbf{LoRSU}}  \\
 \cmidrule(lr){2-3}  \cmidrule(lr){4-5}  \cmidrule(lr){6-7} & \textbf{TI ($\uparrow)$} & \textbf{CC ($\uparrow)$} & \textbf{TI ($\uparrow)$} & \textbf{CC ($\uparrow)$} & \textbf{TI ($\uparrow)$} & \textbf{CC ($\uparrow)$} \\
\midrule
\textbf{2} & $5.2 \mtiny{\pm 0.9}$ & $-11.1 \mtiny{\pm 1.1}$ & $6.1 \mtiny{\pm 0.3}$   & $-11.6 \mtiny{\pm 0.7}$ & $5.6 \mtiny{\pm 0.4}$ & $-9.7 \mtiny{\pm 0.8}$  \\
\textbf{5} & $7.6 \mtiny{\pm 0.8}$ & $-15.0 \mtiny{\pm 0.9}$ & $9.3 \mtiny{\pm 0.4}$ & $-15.6 \mtiny{\pm 0.6}$ & $8.6 \mtiny{\pm 0.3}$ & $-12.6 \mtiny{\pm 0.5}$  \\
\textbf{10} & $7.8 \mtiny{\pm 0.5}$ & $-18.1 \mtiny{\pm 0.8}$ & $9.1 \mtiny{\pm 0.1}$ & $-19.6 \mtiny{\pm 0.5}$ & $9.7 \mtiny{\pm 0.1}$ & $-14.3 \mtiny{\pm 0.7}$  \\
\textbf{20} & $5.9 \mtiny{\pm 0.6}$ & $-20.0 \mtiny{\pm 0.7}$ & $8.1 \mtiny{\pm 0.1}$ & $-21.5 \mtiny{\pm 0.6}$ & $7.4 \mtiny{\pm 0.2}$ & $-15.7 \mtiny{\pm 0.6}$  \\
\bottomrule
\end{tabular}
\endgroup
\end{center}
\vskip -0.1in
\end{table}
We show in Table~\ref{table:lorsu_attn_rand_epochs} that LoRSU's mechanism of choosing the most important attention heads provides a clear advantage in terms of robustness over the other two LoRSU's variants, LoRSU-Rand and LoRSU-AAH. We can see that TI and CC decline in a lower rate compared to that of LoRSU-RAnd and LoRSU-AAH, as we increase the number of training epochs.. As expected, LoRSU-Rand appears to be the least robust method since the random choice of the attention heads constitute it more unstable.

\section{TSI vs. DALLE}\label{sec_appx:tsi_vs_dalle}
In Figures~\ref{fig:tsi_example_1} through~\ref{fig:tsi_example_4}, we present examples of images from TSI and DALLE for different actions. In general, we observe that TSI comprised of   natural, unposed images of senior individuals performing daily tasks, reflecting real-life scenarios. The images are broader, showing the surrounding environment, which is crucial for context. On the other hand, DALLE images are idealized or stylized images. The focus is narrower, with emphasis on the object of the action (e.g. tablet, glass, etc.).

\begin{figure}%
\vskip 0.2in
\centering
    \subfigure[TSI]{\includegraphics[width=.4\linewidth,height=5.2cm]{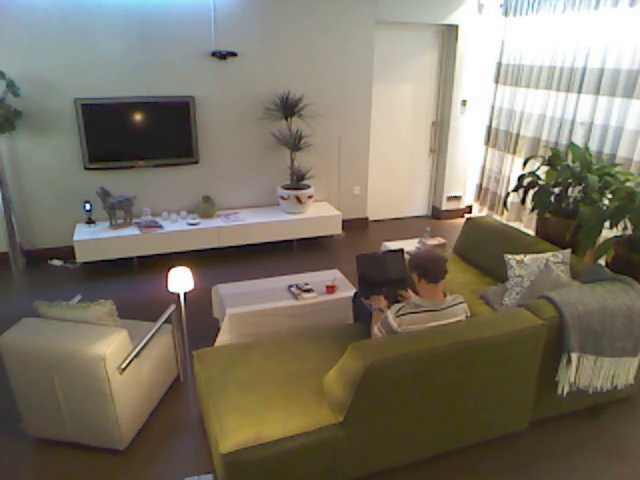}}
    \subfigure[DALLE]{\includegraphics[width=.4\linewidth,height=5.2cm]{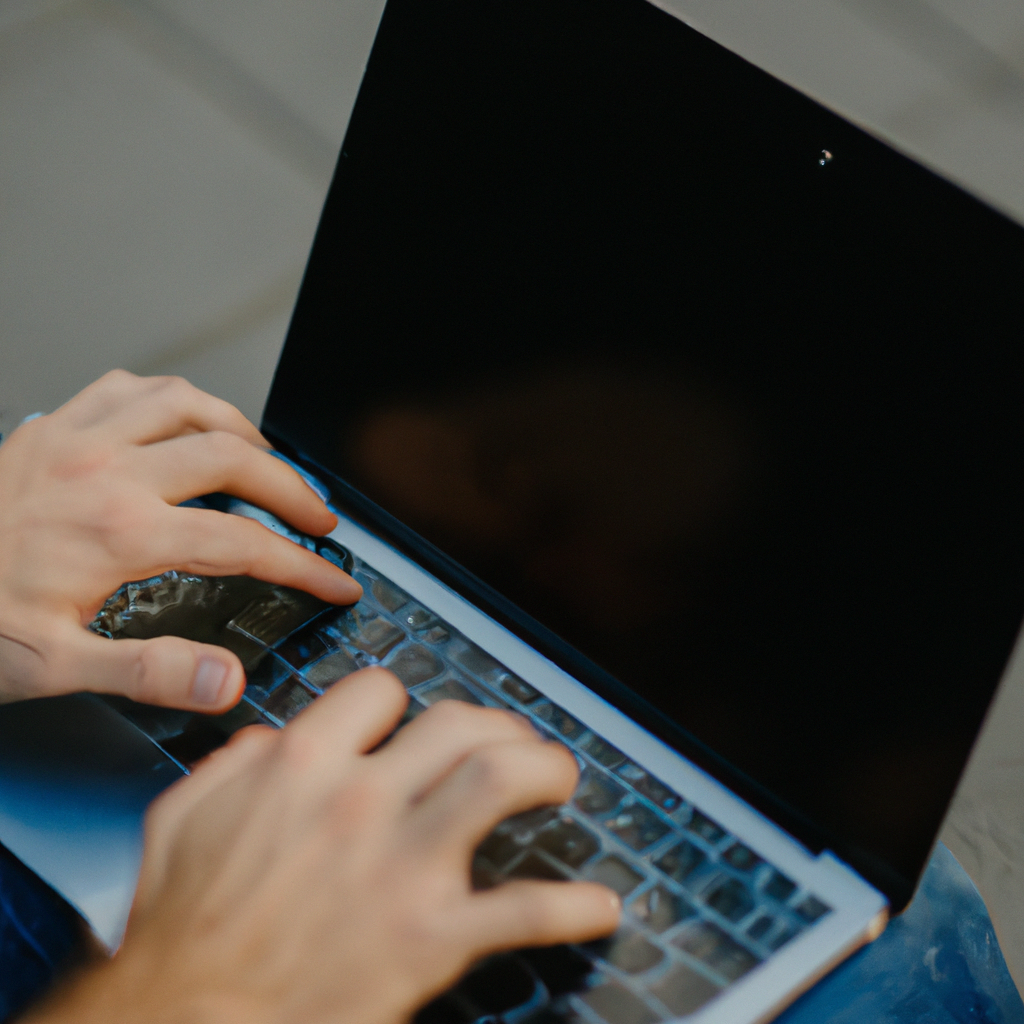}}
    \caption{Instances of the `Use Laptop' action.}%
    \label{fig:tsi_example_1}
\vskip -0.2in
\end{figure}

\begin{figure}%
\vskip 0.2in
\centering
    \subfigure[TSI]{\includegraphics[width=.4\linewidth,height=5.2cm]{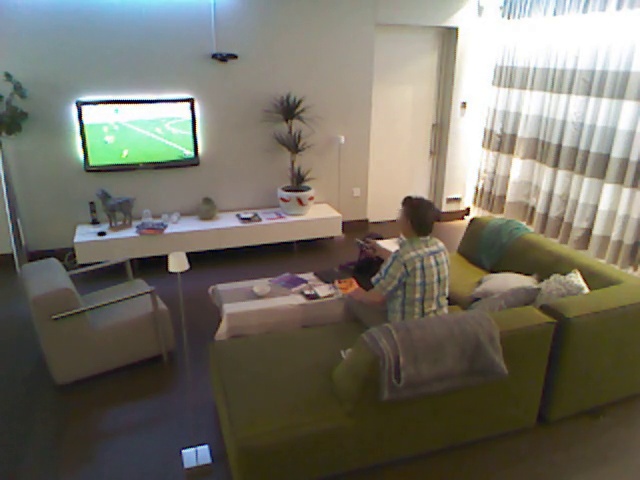}}
    \subfigure[DALLE]{\includegraphics[width=.4\linewidth,height=5.2cm]{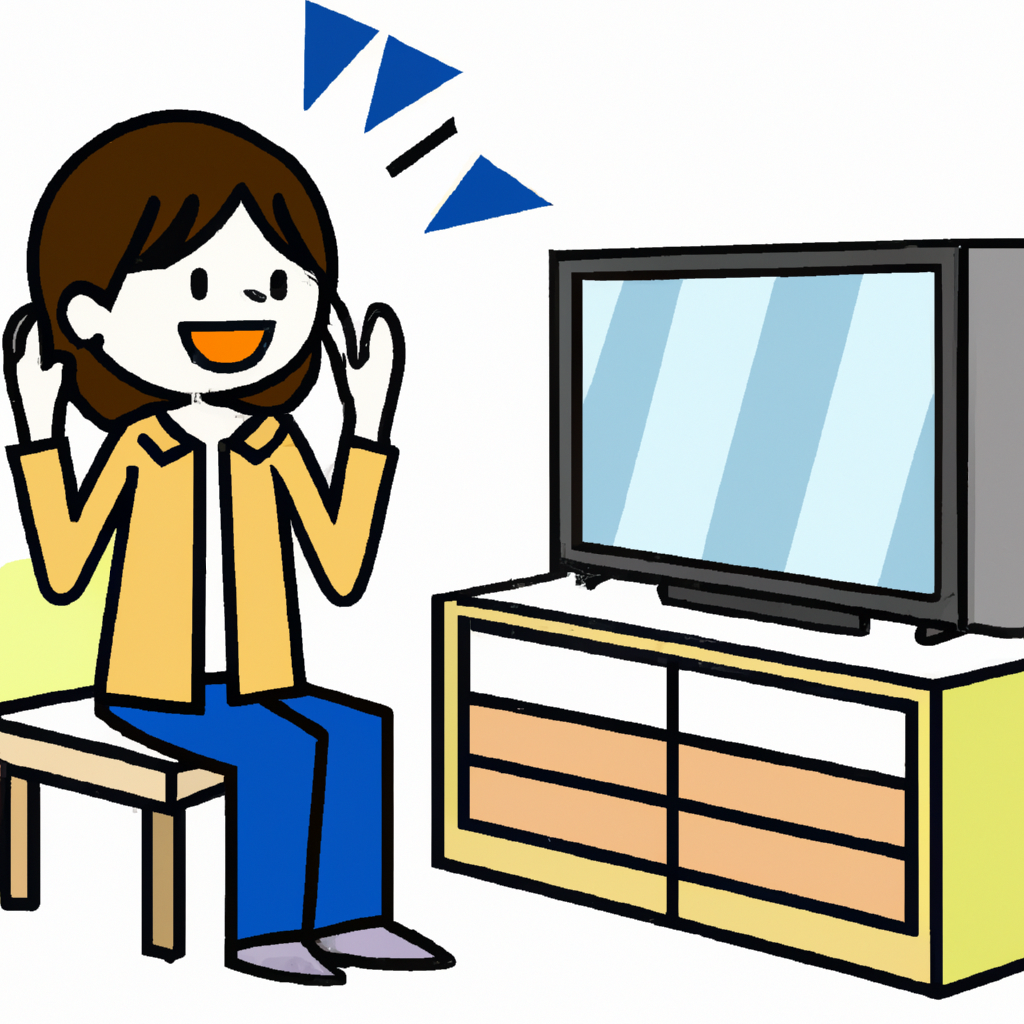}}
    \caption{Instances of the `Watching TV' action.}%
    \label{fig:tsi_example_2}
\vskip -0.2in
\end{figure}

\begin{figure}%
\vskip 0.2in
\centering
    \subfigure[TSI]{\includegraphics[width=.4\linewidth,height=5.2cm]{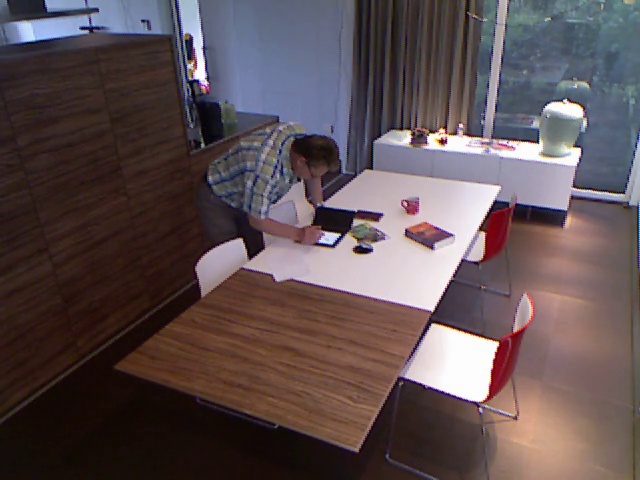}}
    \subfigure[DALLE]{\includegraphics[width=.4\linewidth,height=5.2cm]{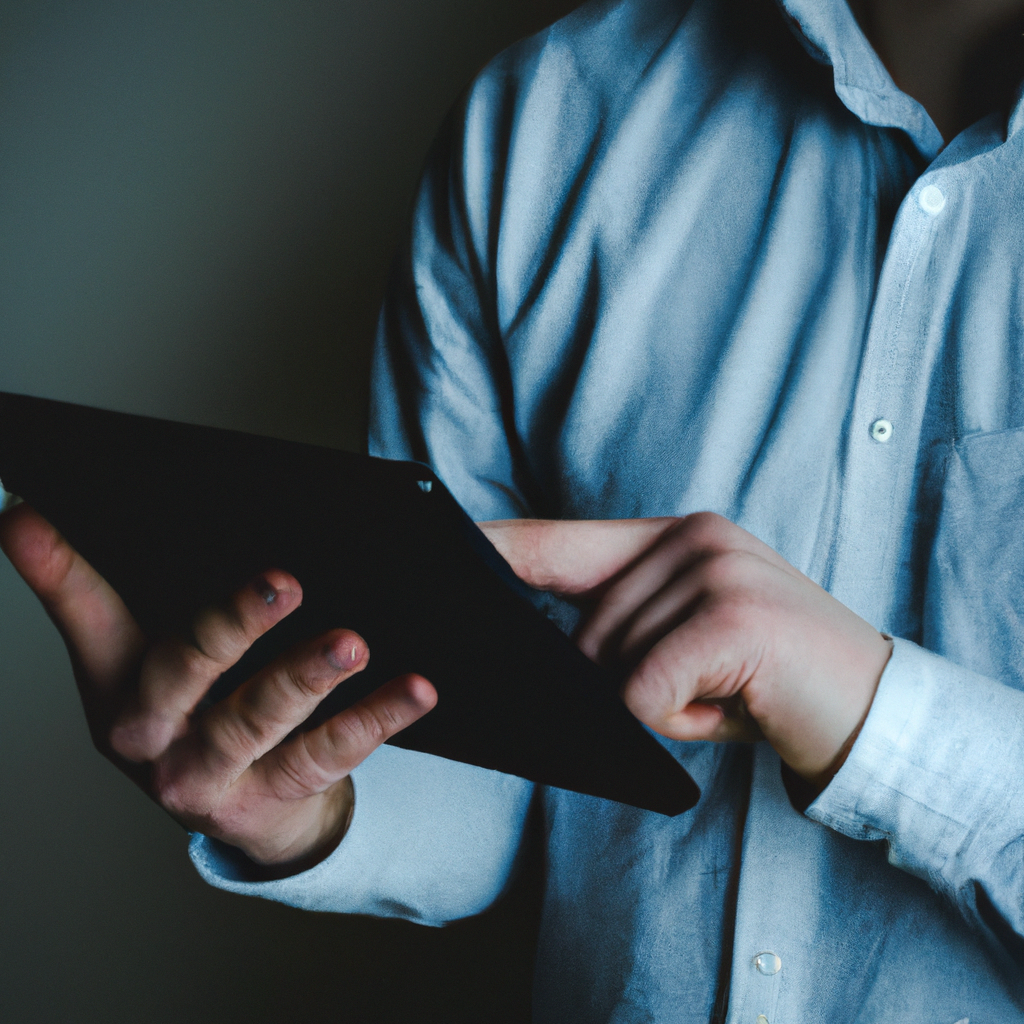}}
    \caption{Instances of the `Use Tablet' action.}%
    \label{fig:tsi_example_3}
\vskip -0.2in
\end{figure}

\begin{figure}%
\vskip 0.2in
\centering
    \subfigure[TSI]{\includegraphics[width=.4\linewidth,height=5.2cm]{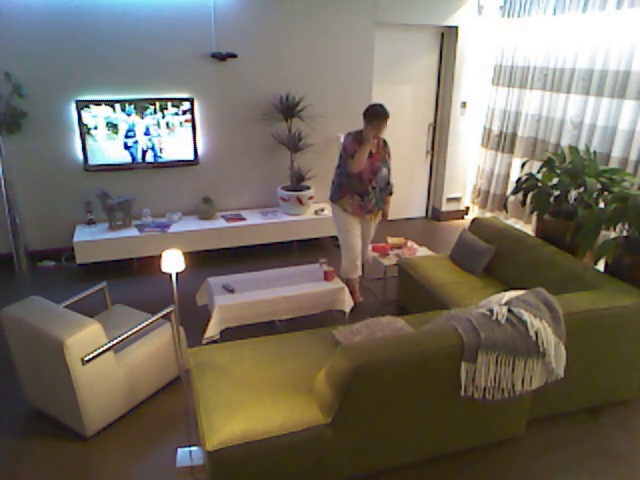}}
    \subfigure[DALLE]{\includegraphics[width=.4\linewidth,height=5.2cm]{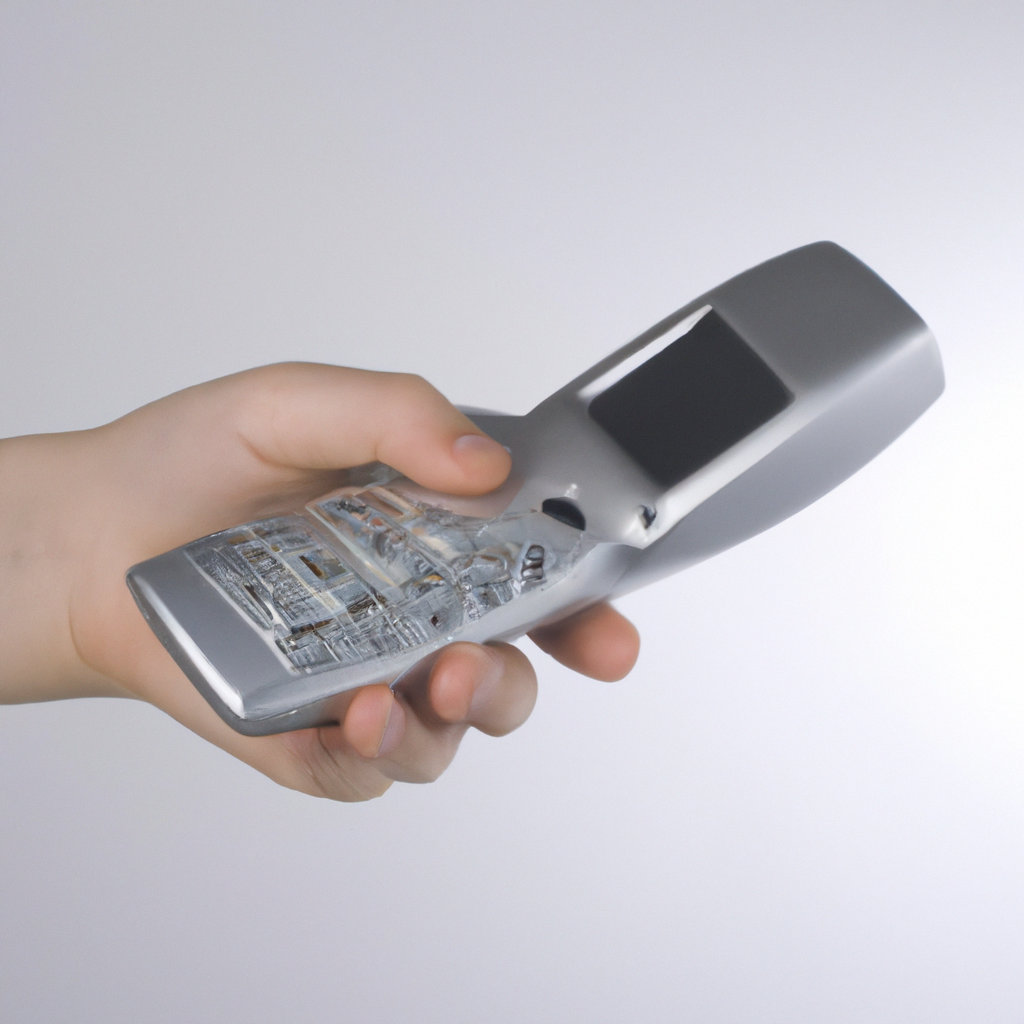}}
    \caption{Instances of the `Use  a telephone' action.}%
    \label{fig:tsi_example_4}
\vskip -0.2in
\end{figure}


\end{document}